\newcommand{\lpalg}{\ensuremath{\mathsf{NAdap}}\xspace}
\newcommand{\pro}{\ensuremath{\mathsf{Profit}}\xspace}
\def \OPTP {\ensuremath{\operatorname{OPT-P}}\xspace}
\def \OPTF {\ensuremath{\operatorname{OPT-F}}\xspace}
\newtheorem{theorem}{Theorem}
\newtheorem{lemma}{Lemma}
\newcommand{\E}{\mathbb{E}}
\newcommand{\Del}{\Delta}
\newcommand{\ep}{\epsilon}
\newcommand{\alp}{\alpha}
\def \OPT {\ensuremath{\operatorname{OPT}}\xspace}
\def \ALG {\ensuremath{\operatorname{ALG}}\xspace}
\newcommand{\LP}{\ensuremath{\operatorname{LP}}\xspace}
\newcommand{\NALG}{\ensuremath{\operatorname{NADAP}}\xspace}
\def \gre {\ensuremath{\operatorname{Greedy}}\xspace}
\def \uni {\ensuremath{\operatorname{Uniform}}\xspace}
\newcommand{\x}{\vec{x}}
\newcommand{\z}{\vec{z}}
\newcommand{\y}{\vec{y}}
\newcommand{\SF}{\mathsf{SF}}
\newtheorem{theorem}{Theorem}
\newtheorem{example}{Example}
\newtheorem{lemma}{Lemma}
\newcommand{\xhdr}[1]{ \noindent{\textbf{#1}}}
\newcommand{\cI}{\mathcal{I}}
\newcommand{\cM}{\mathcal{M}}
\newcommand{\ie}{\emph{i.e.,}\xspace}
\newcommand{\eg}{\emph{e.g.,}\xspace}
\newcounter{int}
\newcommand{\citen}[1] {\setcounter{int}{0}\@for\tmp:=#1\do{%
\ifnum \value{int}>0; \fi%
\setcounter{int}{1}%
\citeauthor{\tmp} \shortcite{\tmp}}}
\newcommand{\citenp}[1]{\setcounter{int}{0}\@for\tmp:=#1\do{%
\ifnum \value{int}>0; \fi%
\setcounter{int}{1}%
\citeauthor{\tmp}, \citeyear{\tmp}}}
\begin{document}

\title{Balancing the Tradeoff between Profit and Fairness in Rideshare Platforms During High-Demand Hours}
\author{Vedant Nanda,\textsuperscript{1,2}
Pan Xu,\textsuperscript{3}
Karthik Abinav Sankararaman,\textsuperscript{1,4}
John P. Dickerson,\textsuperscript{1}
Aravind Srinivasan\textsuperscript{1}\\
\textsuperscript{1}{University of Maryland, College Park}\\
\textsuperscript{2}{Max Planck Institute for Software Systems}\\
\textsuperscript{3}{New Jersey Institute of Technology, New Jersey}\\
\textsuperscript{4}{Facebook}\\
vedant@cs.umd.edu,
pxu@njit.edu,
karthikabinavs@gmail.com,
john@cs.umd.edu,
srin@cs.umd.edu 
}

\maketitle

\begin{abstract}
Rideshare platforms, when assigning requests to drivers, tend to maximize profit for the system and/or minimize waiting time for riders. Such platforms can exacerbate biases that drivers may have over certain types of requests. We consider the case of peak hours when the demand for rides is more than the supply of drivers. Drivers are well aware of their advantage during the peak hours and can choose to be selective about which rides to accept. Moreover, if in such a scenario, the assignment of requests to drivers (by the platform) is made only to maximize profit and/or minimize wait time for riders, requests of a certain type (\eg from a non-popular pickup location, or to a non-popular drop-off location) might never be assigned to a driver. Such a system can be highly unfair to riders. However, increasing fairness might come at a cost of the overall profit made by the rideshare platform. To balance these conflicting goals, we present a flexible, non-adaptive algorithm, \lpalg, that allows the platform designer to control the profit and fairness of the system via parameters $\alpha$ and $\beta$ respectively. We model the matching problem as an online bipartite matching where the set of drivers is offline and requests arrive online. Upon the arrival of a request, we use \lpalg to assign it to a driver (the driver might then choose to accept or reject it) or reject the request. We formalize the measures of profit and fairness in our setting and show that by using \lpalg, the competitive ratios for profit and fairness measures would be no worse than $\alpha/e$ and $\beta/e$ respectively. Extensive experimental results on both real-world and synthetic datasets confirm the validity of our theoretical lower bounds. Additionally, they show that $\lpalg$ under some choice of $(\alpha, \beta)$ can beat two natural heuristics, Greedy and Uniform, on \emph{both} fairness and profit. Code is available at: \url{https://github.com/nvedant07/rideshare-fairness-peak/}.
\end{abstract}

\section{Introduction}\label{sec:intro}

 Rideshare platforms have received significant attention in both the computer science and operations research communities. The following are the three main categories of research. The first studies matching policy design in the rideshare setting  (\ie matching riders and drivers), \eg \cite{xu-aaai-19,ashlagi2019edge,Patrick-18-JAI,tong2016icde,tong2016vldb,tong2017flexible,BeiZ18,dickerson2018assigning,xuAAAI18}. The second considers the spatial-temporal pricing aspects of rideshare, \eg \cite{ma2018spatio,bimpikis2019spatial,kanoria2019near,Banerjee-ec-17,Banerjee:2016}. The third focuses on applying reinforcement-learning approaches to planning and matching problems in rideshare see, \eg \cite{xu2018large,lin2018efficient}.

	In all the aforementioned prior work, the objective is either to maximize the total profit of the system or minimize the waiting time of riders (or a combination thereof). However, both of these objectives are \emph{global} in that they do not ensure sub-group level fairness (\eg riders of a particular protected class being systematically underserved). Consider a scenario during peak hours when there is increased demand for rides, thus, giving the drivers a bargaining advantage to drop rides. Typical criteria used by drivers to reject riders include riders' starting/ending location, trip length, gender, race and age. Recently, it has been reported that drivers also reject riders based on attributes such as gender, race and disability either intentionally or unintentionally, see, \eg \cite{web-female,web-disability,web-guided-dog}. \citen{web-race} reported that ``black passengers using ride-hailing apps have to wait an average of $1$ minute and $43$ seconds longer than white counterparts and are 4\% more likely to have drivers cancel on them.''

	Thus, current rideshare platforms can enable and amplify prejudices in society. To counter this, current rideshare apps implement the following measures:\footnote{See, for example, \S4A ``Sharing Between Users'' in Lyft's privacy policy found at \url{https://www.lyft.com/privacy}.} (1) Riders' photo and destination are hidden from the driver until they accept/reject the request; (2) Penalty is imposed if drivers cancel a certain  number of trips after initially accepting them (\eg drivers' account getting temporarily deactivated).  However, drivers devise new strategies to avoid the restrictions imposed by the above measures. It was reported~\cite{web-cancel} that some Uber drivers bypass the limitation imposed by (1) by starting the trip moments before picking up a passenger to see where the passenger is going. In some other cases, drivers get around the limitation imposed by (2) by intentionally delaying the pickup, thus forcing the rider to cancel the trip instead.

	Ensuring group level fairness and optimizing global profit tends to be somewhat conflicting goals in general; particularly so in rideshare platforms during peak hours. The rationale is that the driver to request ratio is small and thus, drivers can afford to be more choosy. This may lead to unfair practices such as rejecting trips to unpopular destinations and rejecting riders with disabilities. To promote group level fairness among the riders, systems should aggressively match requests which typically incur high cancellation rates. This action, however, would increase the number of cancellations by drivers. The current penalty in popular rideshare platforms is that drivers are deactivated if they cancel too many requests. Thus, prioritizing high risk (in terms of cancellation) trips could lead to more drivers being deactivated and/or leaving the system. In summary, on one hand, prioritizing group level fairness leads to limited drivers and thus, limited total trips. On the other hand, letting the market find its equilibrium leads to amplifying societal biases. Thus, the central question is the following: \emph{Can we design policies that can smoothly tradeoff between the two conflicting objectives}? 

	In this paper, we answer the above question in the affirmative by providing \emph{provably} efficient policies. As is common (see, e.g., \citen{xuAAAI18} for additional motivation) we model the dynamics of the rideshare platform as an online-matching model as follows. We have a bipartite graph $G=(U,V,E)$, where $U$ and $V$ represent the set of available drivers (static or offline) and the request types (dynamic or online arrival) respectively.\footnote{We consider a small time window during the peak hours, thus assuming that the set of drivers are static.} We use the notation $m:= |U|$ and $n :=|V|$ throughout this paper. Each driver \emph{type} represents a specific group (\eg gender, age and race) in a given location, while each request type represents a specific group with a given starting and ending location. There is an edge $f=(u,v)$ if  $u$ is capable of serving the request (of type) $v$ (\ie the distance between them is below a given threshold). The online phase consists of $T$ time-steps with the sets $U$ and $V$ known to the algorithm. In each time-step, a request $v \in V$ arrives and is presented to the algorithm. Upon its arrival an \textit{immediate and irrevocable} decision is required: either reject $v$, or match $v$ with an available driver in $U$. WLOG we assume that each $u$ has a unit capacity (which can be matched only once).\footnote{We can create multiple copies of $u$ to address the case when each $u$ can be matched multiple times. Note that each copy corresponds to the same driver and shares same underlying variables.}

	We have the following key assumptions that we use to show provably guarantees. In the experimental section, we work with real data and show that the algorithms are robust even when some of these assumptions do not necessarily hold.

	\xhdr{Arrival of Requests}. We consider a finite set of $T$ requests $v \in V$ that are drawn from a \emph{known} identical distribution independently; this is commonly~\cite{xuAAAI18} called the \textit{known identical independent distributions} (KIID). The motivation for this assumption stems from the fact that we can often learn the arrival distribution from historical logs~\cite{Yao2018deep,DBLP:conf/kdd/LiFWSYL18,DBLP:conf/kdd/WangFY18}. KIID is widely used in many practical applications of online matching markets including rideshare and crowdsourcing~\cite{xu-aaai-19,dickerson2018assigning,singer2013pricing,singla2013truthful}. 
	Further, we call the expected number of times any request $v$ is sampled from this distribution in the $T$ rounds as the \emph{arrival rate}, denoted by $r_v$. Thus, it is easy to see that $\sum_{v \in V} r_v = T$. We further assume that the total number of arrivals of online requests is far larger than that of drivers in the system, \ie $T \gg |U|$.

\xhdr{Edge existence probabilities}. Each edge $f=(u,v)$ is associated with an existence probability $p_f \in (0,1]$:  once we assign $v$ to $u$, we observe an immediate random outcome of the existence, which is present (\ie $u$ accepts $v$) with probability $p_f$ and not ($u$ cancels $v$) otherwise.
The probability $p_f$ captures the statistical chance that a driver of type $u$ would accept to serve the request of type $v$. We assume that (1) the randomness driving the edge existence is independent across all the edges; (2) the values $p_f$ are provided as part of the input. The first assumption is motivated by individual choice and the second from the fact that historical logs can be used to compute such statistics with high precision.

\xhdr{Cancellation budget}. Each driver $u$ is associated with a given budget of cancellation, $\Del_u \in \mathbb{Z}^{+}$. In other words, driver $u \in U$ will be removed from the graph $G$ if they cancel more than $\Del_u$ requests in the $T$ rounds (in which case we assume that $u$ is temporarily deactivated by the system as a penalty). Once a request $v$ gets rejected, we assume that the system will not try any reassignment of $v$ to other available drivers. This is without loss of generality since any reassignment can be modeled as resampling in the succeeding time-steps.

We assume that the system gains a profit $w_f$ from $f=(u,v)$ if driver $u$ completes (\ie is assigned and the driver accepts) the trip $v$ (in this case, we call it a \emph{successful} assignment or a match). For a given policy $\ALG$, let $\cM$ (possibly random) be the set of successful assignments; we interchangeably use the term \emph{matching} to denote this set $\cM$. We define two objectives, namely \emph{profit} and \emph{fairness}, as follows. 

\begin{description}
\item [\textbf{Profit}:] The expected total profit over all matches, which is defined as $ \E[\sum_{e \in \cM} w_e ]$.

\item [\textbf{Fairness}:] Let $\cM_v \subseteq \cM$ be the subset of edges incident to $v$. Note that $|\cM_v|$ can be larger than $1$ due to multiple arrivals of type $v$ in the $T$ time-steps.
	We define the fairness achieved by \ALG over all request types as  $\min_{v \in V} \frac{\E[|\cM_v|]}{r_v}$, which refers to the minimum ratio of the expected number of matches of type $v$ to that of arrivals.\footnote{Since we are in the context of peak hours, we consider the group level fairness of riders.} Thus, maximizing fairness corresponds to maximizing this minimum ratio. 
\end{description} 

We aim to design an online matching policy that balances the tradeoff between the two objectives of maximizing profit and fairness. 

\subsection{Preliminaries and Main Contributions}
\xhdr{Competitive ratio.} 
Competitive ratio is a commonly-used metric to evaluate the performance of online algorithms. 
Consider an online maximization problem for example. 
Let $\ALG(\mathcal{I})=\E_{I \sim \cI} [\ALG(I)]$ denote the expected performance of $\ALG$ on a given distribution $\mathcal{I}$, where the expectation is taken over the random arrival sequence $I$.  
Let $\OPT(\mathcal{I})=\E[\OPT(I)]$ denote the expected \emph{offline optimal}, where $\OPT(I)$ refers to the optimal value after we observe the full arrival sequence $I$. 
Then, the competitive ratio is defined as $\min_{\mathcal{I}} \frac{\ALG(\mathcal{I})}{\OPT(\mathcal{I})}$. 
It is a common technique to use a Linear Program (\LP) to upper bound $\OPT(\mathcal{I})$ (called the benchmark \LP) and hence get a valid lower bound on the target competitive ratio.  
In our paper, we conduct competitive-ratio analysis on both objectives.

\xhdr{Main contributions.} This paper provides three-fold contributions.

First, we formalize the metric of fairness in rideshare. More specifically, we consider the online-matching based model with multiple objectives.

Second, we present a provably efficient algorithm and provide formal mathematical guarantees. To do so,  we first propose a bi-objective linear program (\LP-\eqref{obj-1} and \LP-\eqref{obj-2}), whose optimal value is at least as large as that of any online algorithm that maximizes either of these objectives (or a combination thereof). Our main algorithm \lpalg uses this bi-objective LP to guide the online decision-making process. In particular, we prove the following main theorems.

\begin{theorem}\label{thm:main-1}
$\lpalg(\alp, \beta)$ achieves a competitive ratio at least  $\Big(\alp /e, \beta /e\Big)$ simultaneously on the profit and fairness for any given $\alp, \beta>0$ with $\alp+\beta \le 1$. 
\end{theorem}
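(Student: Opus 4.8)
The plan is to produce a benchmark linear program whose two optimal objective values, $\OPTP$ (profit) and $\OPTF$ (fairness), separately upper bound the offline optimum of the corresponding objective, and then to show that the non-adaptive policy $\lpalg(\alp,\beta)$ recovers an $\alp/e$ fraction of $\OPTP$ and a $\beta/e$ fraction of $\OPTF$ simultaneously. First I would verify the benchmark. Introduce a variable $x_f$ for each edge $f=(u,v)$ representing the expected number of times a policy assigns $v$ to $u$; then the expected number of \emph{successful} matches along $f$ is $p_f x_f$. Any (offline or online) policy induces such an $\{x_f\}$ satisfying $\sum_{v} p_{(u,v)} x_{(u,v)} \le 1$ for each $u$ (unit capacity, in expectation), $\sum_{u} x_{(u,v)} \le r_v$ for each $v$ (an edge is not tried more often than its type arrives), and the cancellation constraint $\sum_v (1-p_{(u,v)}) x_{(u,v)} \le \Del_u$. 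Hence the profit $\sum_f w_f p_f x_f$ and the per-type fill rate $\sum_u p_{(u,v)} x_{(u,v)}/r_v$ of any policy are feasible, so $\OPTP$ and $\OPTF$ are valid upper bounds on the two offline optima.

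Next I would describe the sampling rule and pin down why $\alp+\beta\le1$ is the correct regime. Let $\x^P$ be an LP solution optimal for profit and $\x^F$ one optimal for fairness, and form the mixed vector $\alp\,\x^P+\beta\,\x^F$. When a request of type $v$ arrives, $\lpalg$ samples the edge $f=(u,v)$ with probability $(\alp\,x^P_f+\beta\,x^F_f)/r_v$ and rejects otherwise; summing over $u$ gives total sampling probability $\alp\sum_u x^P_{(u,v)}/r_v+\beta\sum_u x^F_{(u,v)}/r_v\le \alp+\beta\le 1$, so the rule is a well-defined distribution. Because the rule ignores the realized driver state, the policy is non-adaptive, and the expected number of attempts on $f$ over the $T$ rounds is exactly $\alp\,x^P_f+\beta\,x^F_f$.

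The crux is a survival lemma: at the moment $\lpalg$ samples $f=(u,v)$, the target driver $u$ is still available with probability at least $1/e$. Conditioning on availability, the per-round probability of a \emph{fatal} event on $u$ (a successful match, which exhausts the unit capacity) is $\sum_v (r_v/T)\,(x^{\mathrm{mix}}_{(u,v)}/r_v)\,p_{(u,v)} \le \tfrac1T\sum_v p_{(u,v)} x^{\mathrm{mix}}_{(u,v)} \le (\alp+\beta)/T \le 1/T$, where $x^{\mathrm{mix}}=\alp\,\x^P+\beta\,\x^F$. Letting $h_t$ be the probability $u$ is available before round $t$, a one-step recursion gives $h_{t}\ge\prod_{s}(1-g_s)$ with $\sum_s g_s\le 1$; since the peak-hour regime $T\gg|U|$ forces each $g_s=O(1/T)$, the inequality $1-q\le e^{-q}$ yields $h_t\ge e^{-\sum_s g_s/(1-\max_s g_s)}\to e^{-1}$. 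This step is the main obstacle, because $u$ can also be removed by the \emph{second} mechanism of exceeding its cancellation budget $\Del_u$; one must show, using the budget constraint $\sum_v (1-p_{(u,v)}) x_{(u,v)} \le \Del_u$, that this second removal channel does not erode the $1/e$ survival bound, and one must justify the passage to $e^{-1}$ uniformly over rounds under the random KIID arrival counts rather than a fixed sequence.

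Finally I would assemble the two guarantees by linearity. Applying the survival lemma edge by edge, the expected successful matches along $f$ are at least $\tfrac1e(\alp\,x^P_f+\beta\,x^F_f)\,p_f$. Weighting by $w_f$ and discarding the nonnegative $\beta$-term gives expected profit at least $\tfrac{\alp}{e}\sum_f w_f p_f x^P_f=\tfrac{\alp}{e}\,\OPTP$, hence competitive ratio $\ge\alp/e$ for profit. Likewise, for each type $v$, $\E[|\cM_v|]\ge \tfrac1e\sum_u p_{(u,v)}(\alp\,x^P_{(u,v)}+\beta\,x^F_{(u,v)})\ge \tfrac{\beta}{e}\sum_u p_{(u,v)}x^F_{(u,v)}$, so $\E[|\cM_v|]/r_v\ge \tfrac{\beta}{e}\,\OPTF$ for every $v$, giving $\min_v \E[|\cM_v|]/r_v\ge \tfrac{\beta}{e}\,\OPTF$ and competitive ratio $\ge\beta/e$ for fairness. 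Combining with the benchmark upper bounds from the first step establishes the simultaneous $(\alp/e,\beta/e)$ guarantee.
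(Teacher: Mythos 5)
Your overall architecture matches the paper's (benchmark LP, mixture sampling with total mass $\alp+\beta\le 1$, a driver-availability bound, then linearity plus dropping the cross term), but the step you yourself flag as ``the main obstacle'' is exactly where the proof lives, and the form you aim for is too strong to be true. You want a \emph{pointwise} survival bound: at every sampling moment, $u$ is available with probability at least $1/e$. The capacity channel alone does satisfy this (per-round fatal probability $\le 1/T$ by Constraint~\eqref{cons:match-u} gives $(1-1/T)^{t-1}\ge (1-1/T)^{T-1}\approx 1/e$ uniformly), but the cancellation-budget channel does not: with probe mass $\Del_u$ spread over $T$ rounds, the probability that $u$ has already burned its budget by a late round $t\approx T$ can approach a constant (about $1/2$ for large $\Del_u$, since a sum with mean $\approx\Del_u$ exceeds $\Del_u$ with constant probability), so the joint availability at late times can drop below $1/e$. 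The paper avoids this by proving \emph{time-dependent} bounds --- $\Pr[A_{u,t}=0]\ge(1-1/T)^{t-1}$ for the capacity channel and, via Markov and Constraint~\eqref{cons:pat-u}, $\Pr[B_{u,t}\le\Del_u-1]\ge 1-(t-1)/T$ for the budget channel (Lemma~\ref{lem:1}), plus a positive-correlation statement to multiply them (Lemma~\ref{lem:2}, itself nontrivial and deferred to the full version; you never address the correlation between the two removal channels) --- and then the $1/e$ emerges only \emph{after averaging over the uniformly random arrival time}, since $\frac{1}{T}\sum_{t=1}^{T}\big(1-\frac1T\big)^{t-1}\big(1-\frac{t-1}{T}\big)\to\int_0^1 e^{-s}(1-s)\,ds=1/e$. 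Early arrivals compensate for late ones; no uniform-in-$t$ bound of $1/e$ is available or needed.

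A second, concrete problem is your choice of budget constraint. You impose $\sum_{v}(1-p_{(u,v)})x_{(u,v)}\le\Del_u$ (expected cancellations), whereas the paper uses $\sum_{f\in E_u}x_f\le\Del_u$ (expected probes, Constraint~\eqref{cons:pat-u}). Your constraint is a valid benchmark relaxation, but it is strictly weaker, and the Markov step controlling the number of probes then degrades: combining your constraint with \eqref{cons:match-u} only bounds the probe mass on $u$ by $1+\Del_u$, so the budget-survival bound becomes $1-\frac{(t-1)(1+\Del_u)}{T\Del_u}$, and the time-average $\int_0^1 e^{-s}(1-cs)\,ds$ with $c=(1+\Del_u)/\Del_u$ falls strictly below $1/e$ (e.g., $3/e-1\approx 0.10$ for $\Del_u=1$). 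So even repaired along the paper's averaging route, your LP would not deliver the claimed $(\alp/e,\beta/e)$; you need the probe-count constraint, which the paper justifies by noting that each probe of $u$ either matches $u$ (at most once) or consumes budget, so $u$ is probed at most $\Del_u$ times. The final assembly by linearity and discarding the nonnegative cross term is correct and identical to the paper's.
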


\begin{theorem}\label{thm:hard}
No non-adaptive algorithm can achieve a $(\alp, \beta)$-competitive ratio simultaneously on the profit and fairness with $\alp+\beta>1-1/e$ using \LP-\eqref{obj-1} and \LP-\eqref{obj-2} as the benchmark.
\end{theorem}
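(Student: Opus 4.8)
The plan is to prove the impossibility by exhibiting a single hard instance on which the two competitive ratios of \emph{any} non-adaptive algorithm are forced to sum to at most $1-1/e$; the theorem then follows immediately, since a $(\alp,\beta)$-competitive algorithm with $\alp+\beta>1-1/e$ would have to beat both benchmarks on that instance and hence contradict the bound. Concretely, I would build a gadget whose bottleneck is a single offline driver $u$ of unit capacity which, in the KIID limit as $T\to\infty$, receives requests at total arrival rate exactly $1$. Two competing request types route through $u$: a profit-bearing type and a fairness-critical type, with the weights and arrival rates chosen so that the profit LP (\OPTP) is maximized by devoting $u$ to the profit type and the fairness LP (\OPTF) is maximized by devoting $u$ to covering the fairness type. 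The point of pinning the total rate at $1$ is to make the Poisson matching probability of $u$ tight at $1-1/e$ while keeping both benchmark optima fully ``served'' fractionally.

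The core of the argument is then a Poisson calculation about what a non-adaptive rule can do. Because such a rule commits to fixed attempt probabilities and cannot react to whether $u$ is still free, its behaviour on this gadget is captured by two effective attempt rates $y_1,y_2\ge 0$ with $y_1+y_2\le 1$. In the \iid limit the attempts to $u$ form independent Poisson processes, so $u$ is matched with probability $1-e^{-(y_1+y_2)}$, and conditioned on being matched it serves the profit (resp.\ fairness) type in proportion $y_1:y_2$. Hence, against benchmarks normalized to the same value, the profit ratio equals $\tfrac{y_1}{y_1+y_2}\bigl(1-e^{-(y_1+y_2)}\bigr)$ and the fairness ratio equals $\tfrac{y_2}{y_1+y_2}\bigl(1-e^{-(y_1+y_2)}\bigr)$, so their sum is exactly $1-e^{-(y_1+y_2)}\le 1-1/e$, with equality approached as $y_1+y_2\to 1$. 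This is the identity I am aiming for: the two objectives draw on the \emph{same} single match of $u$, so their competitive ratios are complementary shares of one matching probability, which the rate-$1$ bottleneck caps at $1-1/e$.

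The step I expect to be the main obstacle is making the two benchmarks simultaneously tight and simultaneously exclusive on the bottleneck. I need the profit match and the fairness match to genuinely compete for $u$'s single unit of capacity, yet the fairness objective is a $\min$ over \emph{all} request types, so I must prevent the profit type from dragging that minimum down (for instance by supplying its fairness off the bottleneck) while still keeping the bottleneck's total rate at $1$ and both LP values at their normalized optima. This is delicate precisely because forcing \OPTP to its full value tends to inflate the bottleneck's arrival rate above $1$, which would loosen the Poisson bound; resolving it may require a small auxiliary structure, or a limiting family of instances together with a concentration argument to pass from the per-gadget expectation to a worst-case guarantee. Finally, I would check that the proof uses only non-adaptivity and the two stated LPs as benchmark: an adaptive algorithm could rebalance after observing the status of $u$ and escape the clean Poisson split, so the impossibility is genuinely specific to the class of algorithms named in the statement.
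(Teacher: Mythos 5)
Your plan has the same skeleton as the paper's proof of Theorem~\ref{thm:hard} (Lemma~\ref{lem:hard-cr} applied to Example~\ref{exam:hard}): a star around a single driver with $\Del_u=1$, non-adaptive play collapsed to fixed attempt probabilities, the KIID-limit factor $1-e^{-z}$, and the two competitive ratios as complementary shares of that one factor. However, the step you yourself flag as ``the main obstacle'' is the actual content of the proof, and the concrete normalization you propose --- pinning the bottleneck's total arrival rate at exactly $1$ --- does not work. If the rate-$1$ bottleneck is split between a profit type of rate $r_0<1$ (with $p_0=1$, $w_0=1$) and a fairness type, then \OPTP is capped at $r_0$ by Constraint~\eqref{cons:pat-v} rather than by the driver's full capacity, so the profit ratio is $\frac{y_1}{y_1+y_2}\cdot\frac{1-e^{-(y_1+y_2)}}{r_0}$, not the clean share you want: taking $r_0=1/2$, $y_1=1/2$, $y_2=0$ already gives profit ratio $2(1-e^{-1/2})\approx 0.787>1-1/e$ with nonnegative fairness ratio, so no bound of $1-1/e$ on the sum can come out of that gadget. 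The complementary-share identity requires both benchmarks to saturate the \emph{same} unit of driver capacity, and a rate-$1$ bottleneck cannot force this.

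The paper closes exactly this gap by \emph{not} capping the bottleneck's arrival rate: Example~\ref{exam:hard} uses $T=K+1$ uniform-rate types, one profit type with $p_0=1$ and $K$ fairness-critical types with $p_j=\ep$, all weights $1$. Then $\OPTP=1$ (the driver's full capacity), while $\OPTF=\ep/(K+\ep)$ is throttled by the cancellation budget $\sum_f x_f\le\Del_u=1$ in Constraint~\eqref{cons:pat-u} --- and this \emph{tiny} fairness benchmark is precisely what restores your identity: the $\ep$'s cancel in the fairness ratio, leaving the attempt share $\frac{\sum_{j\ge1}z_j}{z}\cdot\frac{K+\ep}{K}\bigl(1-e^{-z}\bigr)$, while the $\ep$-edges add only $O(\ep)$ slack to the profit ratio; the sum is at most $1-e^{-z}+2\ep$ and $\ep\to0$ finishes. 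Two further points resolve worries you raised. First, the fairness objective being a $\min$ over all types needs no auxiliary structure: the $\min$ is upper-bounded by the \emph{average} over the $K$ $\ep$-types, so the profit type's own fairness is irrelevant (it dragging the minimum down would only help the upper bound), and no concentration argument is needed since everything is an expectation computation. Second, a caution: your cap $y_1+y_2\le1$ was inherited from the rate-$1$ bottleneck, but once the bottleneck rate is $K+1$ the analogous cap $z=\sum_j z_j\le 1$ does not follow from the per-arrival sampling constraints (which only give $z_j\le 1$ per type); the paper asserts it as part of its characterization of non-adaptive algorithms, and it does hold for the $\lpalg$ family via Constraint~\eqref{cons:pat-u} and $\alp+\beta\le1$. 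Any rebuilt gadget must make sure the class of algorithms it quantifies over actually carries this total-attempt-mass constraint.
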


 Third, we provide an extensive evaluation of the algorithm and modeling assumption on a real-world dataset collected from a large on-demand taxi dispatching platform. The experiments have many novel insights; among others, we show that even when some of the assumptions that were used to prove mathematical guarantees do not hold, \lpalg performs well in practice.
 
\section{Additional Related Work}

There is a large body of work which studies fairness issues in resource allocation problems (divisible or indivisible goods), see, \eg\cite{ghodsi2011dominant,bateni2018fair,parkes2015beyond,kash2014no,fain2018fair}. Most of these works require the allocation policy to satisfy certain properties in fair mechanism design such as strategy-proofness and envy-freeness. These properties do not apply here, however. Recent work by \citen{suhr2019} proposes a matching policy to balance fairness and profit over time; however, they do not provide any optimality guarantess for their proposed policy. \citen{lesmana2019neurips} has a setting similar to ours, however they consider the case when there are more drivers than riders. Our focus is on the case when the number of drivers in the system are less than the demand for rides (\ie peak hours).
  
Our model belongs to a more general optimization paradigm, called \emph{Multi-Objective Optimization}. \citen{ravi1993many} presented approximation algorithms for a variety of network-design problems. \citen{grandoni2009} designed several iterative-rounding based approximation algorithms for multi-objective optimization problems. More recently, \citen{aggarwal2014} studies the Bi-objective Online Bipartite Matching where there is essentially one single objective: the minimum matching ratio over two disjoint sets of edges. \citen{esfandiari2016bi} considers the Bi-objective Online Submodular Optimization problem, where the two objectives are two monotone submodular functions. Our objectives are two linear functions, but our model assumes a more complicated setting (\ie edge existence probabilities and cancellation quotas on the offline-side vertices).  The models studied by \citen{bansal2012lp} and~\citen{BSSX17} have the closest setting to us: each edge has an independent existence probability and each vertex from the offline and/or online side has a patience constraint (similar to cancellation quota for each driver in our setting) on it. However, both those works investigated only one single objective: maximization of the total profit.

\section{Valid Benchmarks for Profit and Fairness}

We first present our benchmark LPs and then an LP-based parameterized algorithm.
For each edge $f=(u,v)$, let $x_f$ be the expected number of probes on edge $f$ (\ie assignments of $v$ to $u$ but not necessarily matches) in the offline optimal. For each $u$ ($v$), let $E_u$ ($E_v$) be the set of neighboring edges incident to $u$ ($v$). Consider the following bi-objective LP.

\begin{alignat}{2}
\max & ~~\sum_f w_f x_f p_f  && \label{obj-1} \\
\max \min_{v \in V} & ~~\frac{\sum_{f \in E_v} x_f p_f}{r_v} &&  \label{obj-2} \\
\text{s.t.} & \sum_{f \in E_u} x_f p_f \le 1  &&~~ \forall u \in U \label{cons:match-u} \\ 
 & \sum_{f \in E_u} x_f \le \Delta_u  && ~~ \forall u \in U \label{cons:pat-u} \\ 
  & \sum_{f \in E_v} x_f \le r_v  && ~~\forall v \in V \label{cons:pat-v}  \\
  & 0 \le x_f  && ~~ \forall f \in E\label{cons:edge}
\end{alignat}

Let \LP-\eqref{obj-1} and \LP-\eqref{obj-2} denote the two LPs with the respective Objective \eqref{obj-1} and   \eqref{obj-2}, each with constraints \eqref{cons:match-u}, \eqref{cons:pat-u}, \eqref{cons:pat-v}, \eqref{cons:edge}. Note that we can rewrite Objective \eqref{obj-2} as a linear one like $\max \eta$ with additional linear constraints as $\eta \le \frac{\sum_{f \in E_v} x_f p_f}{r_v} $ for all $v \in V$. For presentation convenience, we keep the current compact version. The validity of \LP-\eqref{obj-1} and \LP-\eqref{obj-2} as benchmarks for our two objectives can be seen in the following lemma.

\begin{lemma} \label{lem:LP}
 \LP-\eqref{obj-1} and \LP-\eqref{obj-2} are valid benchmarks for the two respective objectives, profit and fairness. In other words, the optimal values to \LP-\eqref{obj-1} and \LP-\eqref{obj-2} are valid upper bounds for the expected profit and fairness achieved by the offline optimal respectively.
\end{lemma}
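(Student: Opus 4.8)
The plan is to take any offline optimal policy---separately for each of the two objectives---and to read off from its (random) execution a single vector $x=(x_f)$ that is feasible for the shared constraint set \eqref{cons:match-u}--\eqref{cons:edge} and whose corresponding objective value equals that policy's expected performance. Since both LPs maximize over exactly this feasible region, their optima then dominate the offline optima, which is precisely the claimed upper bound.

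Concretely, fix an offline optimal policy and let $X_f$ denote the random number of probes it makes on edge $f=(u,v)$ over the $T$ rounds (assignments of $v$ to $u$, which may or may not turn into a match). Define $x_f := \E[X_f]$, so nonnegativity \eqref{cons:edge} is immediate. The heart of the argument is the identity that the expected number of successful matches on $f$ equals $p_f x_f$, and this is where I expect the main technical care to be needed: $X_f$ is not a fixed number but is governed by a stopping rule, since whether a further probe on $f$ is issued depends on the past history of arrivals and of earlier accept/cancel outcomes. However, the decision to issue the $k$-th probe on $f$ is made \emph{before} its outcome is revealed, and by the independence assumption each probe succeeds with probability $p_f$ independently of that history. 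A Wald-type / optional-stopping argument then gives $\E[\#\text{matches on } f] = p_f\,\E[X_f] = p_f x_f$; equivalently, writing the matches on $f$ as $\sum_k \mathbf{1}[k\text{-th probe issued}]\cdot Y_{f,k}$ with $Y_{f,k}\sim\mathrm{Bernoulli}(p_f)$ independent of the issuing event, linearity of expectation yields the same.

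With this identity in hand, each constraint follows from a deterministic per-sample-path bound followed by taking expectations. Constraint \eqref{cons:match-u} holds because each driver $u$ has unit capacity, so on every sample path the number of matches incident to $u$ is at most $1$; taking expectations and applying the identity gives $\sum_{f\in E_u} p_f x_f \le 1$. Constraint \eqref{cons:pat-u} follows from the cancellation budget: on every sample path the number of probes incident to $u$ is bounded by $\Del_u$, since $u$ is deactivated once its budget is spent, so $\sum_{f\in E_u} X_f \le \Del_u$ surely and hence $\sum_{f\in E_u} x_f \le \Del_u$ in expectation. Constraint \eqref{cons:pat-v} holds because each arrival of type $v$ is assigned at most once (decisions are immediate and irrevocable), so the total number of probes on edges incident to $v$ is at most the number of arrivals of $v$; taking expectations and using that this expected number is $r_v$ gives $\sum_{f\in E_v} x_f \le r_v$.

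Finally I match the objectives to conclude. For profit, the offline optimal's expected profit is $\E[\sum_{e\in\cM} w_e] = \sum_f w_f\,\E[\#\text{matches on }f] = \sum_f w_f p_f x_f$, which is exactly Objective \eqref{obj-1} evaluated at the feasible $x$, so the optimum of \LP-\eqref{obj-1} is at least this value. For fairness, $\E[|\cM_v|] = \sum_{f\in E_v} p_f x_f$, so the offline fairness $\min_{v} \E[|\cM_v|]/r_v$ equals Objective \eqref{obj-2} at $x$, and feasibility again forces the optimum of \LP-\eqref{obj-2} to be at least the offline fairness. The two bounds together establish the lemma; the only genuinely delicate step is the probes-to-matches identity, while everything else is a routine combination of sample-path counting and linearity of expectation.
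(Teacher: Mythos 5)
Your proof is correct and takes essentially the same route as the paper's: define $x_f$ as the expected number of probes the offline optimal makes on edge $f$, verify feasibility of constraints \eqref{cons:match-u}--\eqref{cons:edge} by per-sample-path counting followed by expectations, and observe that the LP objectives evaluated at this feasible point equal the offline optimal's expected profit and fairness. Your only addition is spelling out the Wald-type/optional-stopping justification of $\E[\#\text{matches on } f] = p_f x_f$, a step the paper asserts implicitly when it says the objectives ``capture the exact expected profit and fairness.''
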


\begin{proof}
We can verify that objective functions~\eqref{obj-1} and~\eqref{obj-2} each captures the exact expected profit and fairness achieved by the offline optimal, according to our definition in Section~\ref{sec:intro}. To prove the validity of the benchmark for each objective, it suffices to show the feasibility of all constraints for any given offline optimal. 

Constraint \eqref{cons:match-u} is valid since each driver $u$ has a unit capacity; Constraint \eqref{cons:pat-u} is valid since each $u$ can be probed at most $\Delta_u$ times according to our assumption; Constraint \eqref{cons:pat-v} is valid since the expected number of probes related to each $v$ should be no more than that of online arrivals (recall that we can try at most one assignment upon the arrival of $v$). Thus we justify the feasibility of all constraints for any given offline optimal. 
\end{proof}

\section{An LP-based Parameterized Algorithm}
Now we present an LP-based parameterized algorithm. Let $\{x^{*}_{f}\}$ and $\{y^*_{f}\}$ be an optimal solution to \LP-\eqref{obj-1} and \LP-\eqref{obj-2} respectively. Consider a given pair of parameters $(\alp, \beta)$ with $0\le \alp, \beta \le 1, \alp+\beta \le 1$. $\lpalg(\alp,\beta)$ will sample an assignment guided by the two LP solutions $\{x^{*}_{f}\}$ and $\{y^*_{f}\}$,  with probabilities $p$ and $q$ respectively. Note that constraint \eqref{cons:pat-u} for the LP models the expectation of number of probes. While this number might exceed $\Delta_u$, it's important to note that this solution is for the offline case and does not correspond to the actual assignments. During the execution of our algorithm, we explicitly check if a driver is available before making the assignment, thus ensuring that no driver is assigned more than $\Delta_u$ rides. The details are as follows.

\begin{algorithm}[h!]
\DontPrintSemicolon
Let $v$ arrive at time $t$. \;
With probability $\alp$, sample an edge (assignment) $f=(u,v) \in E_v$with probability $x^{*}_{f}/r_v$. Assign $v$ to $u$ iff $u$ is available at $t$. \;
 With probability $\beta$, sample an edge (assignment) $f'=(u',v) \in E_v$with probability $y^*_{f}/r_v$. Assign $v$ to $u'$ iff $u'$ is available at $t$. \;
 With probability $1-\alp-\beta$, reject $v$.

\caption{An LP-based non-adaptive algorithm: $\lpalg(\alp, \beta)$}
\label{alg:lp-alg}
\end{algorithm}

The key part to prove Theorem \ref{thm:main-1} is the computation of the probability that each driver is available at each time. Focus  on a given driver $u$ and a time $t \in [T]$. Let $\SF_{u,t}$ be the event that $u$ is \emph{available} at (the beginning of) $t$. Note that the occurrence of $\SF_{u,t}$ can be guaranteed by these two events: (1)    $u$ never \emph{received and simultaneously accepted} any assignment prior to $t$; (2)  $u$ received no more than $\Del_u-1$ assignments prior to $t$.  For each $f=(u,v)  \in E_u$ and $\ell<t$, let $X_{f,\ell}$ indicate if $f$ comes (or $v$ comes) at time $\ell$, $Y_{f,t}$ indicate if $f$ gets sampled in $\lpalg(p,q)$ at $t$ and $Z_{f,t}$ indicate if $u$ accepts $f$ after the assignment. Set $A_{u,t}\doteq \sum_{\ell<t} \sum_{f \in E_u} X_{f,\ell} Y_{f,\ell}Z_{f,\ell} $ and $B_{u,t}\doteq \sum_{\ell<t} \sum_{f \in E_u} X_{f,\ell} Y_{f,\ell}$. From our analysis, we see that $\Pr[\SF_{u,t}] \ge \Pr[(A_{u,t}=0) \wedge (B_{u,t} \le \Del_u-1)]$.

\begin{lemma}\label{lem:1}
For any given $u$ and $t \in [T]$, we have
$$\Pr[A_{u,t}=0] \ge \Big(1-\frac{1}{T}\Big)^{t-1}, ~ \Pr[B_{u,t}\le \Del_u-1] \ge 1-\frac{t-1}{T}$$
\end{lemma}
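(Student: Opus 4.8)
The plan is to analyze both $A_{u,t}$ and $B_{u,t}$ through a \emph{per-round} decomposition, exploiting that the requests arrive \iid across the $T$ rounds and that the sampling coins of $\lpalg(\alp,\beta)$ and the edge-acceptance coins $Z_{f,\ell}$ are all mutually independent. The first fact I would record is structural: at any single round $\ell$ exactly one request arrives, and the three branches of $\lpalg(\alp,\beta)$ are mutually exclusive, so at most one edge is sampled at $\ell$, and that edge is incident to the arrived request. Since a fixed $u$ connects to a given request through at most one edge, this means at most one edge incident to $u$ is probed per round. Consequently the per-round count $\sum_{f \in E_u} X_{f,\ell} Y_{f,\ell}$ (and likewise the successful-probe count with the extra factor $Z_{f,\ell}$) is $\{0,1\}$-valued, and these per-round contributions are independent across $\ell$ because each depends only on round-$\ell$ randomness.

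For the first bound I would compute the probability that a successful probe to $u$ lands at a fixed round $\ell$. Request $v=v(f)$ arrives with probability $r_v/T$; conditioned on this, edge $f=(u,v)$ is sampled with probability $(\alp x^*_f + \beta y^*_f)/r_v$, and $u$ accepts with probability $p_f$. Summing over $f \in E_u$ gives
\[
\Pr\Big[\textstyle\sum_{f\in E_u} X_{f,\ell}Y_{f,\ell}Z_{f,\ell} = 1\Big] = \frac{1}{T}\sum_{f \in E_u} (\alp x^*_f + \beta y^*_f)\,p_f \le \frac{\alp+\beta}{T} \le \frac{1}{T},
\]
where the first inequality uses that both $\{x^*_f\}$ and $\{y^*_f\}$ satisfy constraint~\eqref{cons:match-u} and the second uses $\alp+\beta \le 1$. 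Since the event $A_{u,t}=0$ is exactly the simultaneous non-occurrence of these $t-1$ independent $\{0,1\}$ events, $\Pr[A_{u,t}=0]$ factors as a product of terms each at least $1-1/T$, yielding $(1-1/T)^{t-1}$.

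For the second bound I would avoid any product/independence argument and instead use a first-moment (Markov) estimate. By linearity of expectation and the same per-round computation — now \emph{without} the acceptance factor $p_f$ — the expected number of probes to $u$ at round $\ell$ is $\frac{1}{T}\sum_{f \in E_u}(\alp x^*_f + \beta y^*_f) \le (\alp+\beta)\Del_u/T \le \Del_u/T$, where the bound uses constraint~\eqref{cons:pat-u} for both $\{x^*_f\}$ and $\{y^*_f\}$. Summing over $\ell < t$ gives $\E[B_{u,t}] \le (t-1)\Del_u/T$. Because $B_{u,t}$ is a nonnegative integer, the event $\{B_{u,t} \le \Del_u - 1\}$ is the complement of $\{B_{u,t} \ge \Del_u\}$, and Markov's inequality gives $\Pr[B_{u,t} \ge \Del_u] \le \E[B_{u,t}]/\Del_u \le (t-1)/T$, which is exactly the claim.

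The routine parts are the two per-round probability computations; the step deserving the most care is the structural claim underpinning the first bound — that each round contributes at most one probe to $u$ and that the per-round contributions are genuinely independent — since the multiplicative form $(1-1/T)^{t-1}$, as opposed to a weaker union-type bound, relies on it. The second bound is comparatively easy, as Markov's inequality needs only the expectation and the integrality of $B_{u,t}$ and so sidesteps any dependence subtleties entirely.
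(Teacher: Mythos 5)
Your proof is correct and follows essentially the same route as the paper's: the first bound via independence across rounds and the per-round probe probability $\frac{1}{T}\sum_{f\in E_u}(\alpha x^*_f+\beta y^*_f)p_f \le 1/T$ from constraint~\eqref{cons:match-u}, and the second via Markov's inequality with $\E[B_{u,t}] \le (t-1)\Delta_u/T$ from constraint~\eqref{cons:pat-u}. Your only addition is making explicit the structural fact (at most one probe of $u$ per round, independence across rounds) that the paper uses implicitly, which is a sound clarification rather than a deviation.
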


\begin{proof} \begin{align*}
\Pr[A_{u,t}=0] &=\prod_{\ell<t} \Pr\Big[\sum_{f \in E_u} X_{f,\ell} Y_{f,\ell} Z_{f,\ell}=0\Big]\\
 &=\prod_{\ell<t} \left( 1-\Pr\Big[ \sum_{f \in E_u}X_{f,\ell} Y_{f,\ell} Z_{f,\ell} \ge 1 \Big]\right) \\
 &=\prod_{\ell<t} \left(  1-\sum_{f \in E_u} \frac{r_v}{T} \Big(\alp \frac{x_f^*}{r_v}+\beta \frac{y_f^*}{r_v} \Big) p_f \right)\\
 & =\prod_{\ell<t} \left(  1-\frac{1}{T}\sum_{f \in E_u}  \Big(\alp x_f^* p_f+\beta y_f^* p_f \Big)\right)  \\
 &\ge \Big(1-\frac{1}{T}\Big)^{t-1}
 \end{align*}
The last inequality follows from $\sum_{f \in E_u} x_f^* p_f \le 1$ and $\sum_{f \in E_u} y_f^* p_f \le 1$ due to Constraint \eqref{cons:match-u}, and $\alp+\beta \le 1$.

As for the second inequality, we have
\begin{align}
\Pr[B_{u,t} \le \Del_u-1] =1-\Pr[B_{u,t} \ge \Del_u] \ge 1-\frac{1}{\Del_u} \E[B_{u,t}] \label{ineq:pat-1}
\end{align}
Note that 
\begin{align}
\E[B_{u,t}] &=\sum_{\ell<t} \sum_{f \in E_u} \E[X_{f,\ell} Y_{f,\ell}] 
=\sum_{\ell<t} \sum_{f \in E_u} \frac{r_v}{T}  \Big( \frac{\alp x_f^*}{r_v}+ \frac{\beta y_f^*}{r_v} \Big) \\
& \le \frac{1}{T}\sum_{\ell<t} (\alp+\beta) \Delta_u \le \frac{t-1}{T} \Del_u  \label{ineq:pat-2}
\end{align}
The last inequalities above follows from $\sum_{f \in E_u} x_f^* \le \Del_u$ and $\sum_{f \in E_u} y_f \le \Del_u$ due to Constraint \eqref{cons:pat-u}, and $\alp+\beta \le 1$. Substituting Inequality \eqref{ineq:pat-2} to \eqref{ineq:pat-1}, we have that $\Pr[B_{u,t} \le \Del_u-1] \ge 1-(t-1)/T$. 
\end{proof}

\begin{lemma}\label{lem:2}
$$\Pr[B_{u,t} \le \Del_u-1 | A_{u,t}=0 ] \ge 1-\frac{t-1}{T}$$
\end{lemma}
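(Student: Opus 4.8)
The plan is to condition on $\{A_{u,t}=0\}$ in exactly the same spirit as Lemma~\ref{lem:1} handles the unconditional case: reduce everything to independent per-round contributions, bound the conditional expectation of $B_{u,t}$, and then apply Markov's inequality. The first step is to isolate, for each round $\ell<t$, the probe indicator $P_\ell\doteq\sum_{f\in E_u}X_{f,\ell}Y_{f,\ell}$ and the match indicator $S_\ell\doteq\sum_{f\in E_u}X_{f,\ell}Y_{f,\ell}Z_{f,\ell}$, so that $B_{u,t}=\sum_{\ell<t}P_\ell$ and $A_{u,t}=\sum_{\ell<t}S_\ell$. Because at most one request arrives per round and the algorithm makes at most one assignment, both $P_\ell$ and $S_\ell$ are genuine $\{0,1\}$ indicators with $S_\ell\le P_\ell$. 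Moreover, since arrivals are i.i.d., the sampling in $\lpalg$ is drawn afresh each round, and acceptances are independent, the pairs $\{(P_\ell,S_\ell)\}_{\ell<t}$ are mutually independent across $\ell$. The key observation is that $\{A_{u,t}=0\}$ is precisely $\bigcap_{\ell<t}\{S_\ell=0\}$.

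Second, I would exploit this independence to factor the conditional expectation. Since $P_\ell$ is independent of $S_{\ell'}$ for every $\ell'\neq\ell$,
\[
\E[B_{u,t}\mid A_{u,t}=0]=\sum_{\ell<t}\E[P_\ell\mid S_\ell=0]=\sum_{\ell<t}\Pr[P_\ell=1\mid S_\ell=0].
\]
Writing $a_\ell\doteq\Pr[P_\ell=1]$ and $b_\ell\doteq\Pr[S_\ell=1]$, a one-line computation using $S_\ell\le P_\ell$ gives $\Pr[P_\ell=1\mid S_\ell=0]=(a_\ell-b_\ell)/(1-b_\ell)$.

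The crux, and the only place where the coupling between $A_{u,t}$ and $B_{u,t}$ could hurt, is to show that conditioning on the absence of acceptances does not inflate the expected number of probes, i.e. $(a_\ell-b_\ell)/(1-b_\ell)\le a_\ell$. This is equivalent to $a_\ell b_\ell\le b_\ell$, which holds simply because $a_\ell=\Pr[P_\ell=1]\le 1$; intuitively, a probe is only weak evidence for an acceptance, so ruling out all acceptances can only decrease the probe count. Consequently $\E[B_{u,t}\mid A_{u,t}=0]\le\sum_{\ell<t}a_\ell=\E[B_{u,t}]$, and I can reuse verbatim the bound $\E[B_{u,t}]\le\frac{t-1}{T}\Del_u$ already established in Inequality~\eqref{ineq:pat-2}.

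Finally, applying Markov's inequality to the conditional law exactly as in Inequality~\eqref{ineq:pat-1} yields
\[
\Pr[B_{u,t}\ge\Del_u\mid A_{u,t}=0]\le\frac{\E[B_{u,t}\mid A_{u,t}=0]}{\Del_u}\le\frac{1}{\Del_u}\cdot\frac{t-1}{T}\Del_u=\frac{t-1}{T},
\]
which rearranges to the claimed inequality. The main obstacle is conceptual rather than computational: recognizing that $A_{u,t}$ and $B_{u,t}$ are coupled only through the shared probe indicators $X_{f,\ell}Y_{f,\ell}$, that this coupling decomposes round-by-round thanks to independence across time, and that the resulting per-round conditioning is benign precisely because the conditional probe probability never exceeds the unconditional one.
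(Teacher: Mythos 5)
Your proof is correct and matches the route the paper intends: the paper defers the details to its full version but states that the content of Lemma~\ref{lem:2} is precisely the positive correlation of the events $(A_{u,t}=0)$ and $(B_{u,t}\le \Delta_u-1)$, which you establish via the decomposition into round-wise independent pairs $(P_\ell,S_\ell)$ with $S_\ell\le P_\ell$, the per-round bound $\Pr[P_\ell=1\mid S_\ell=0]=(a_\ell-b_\ell)/(1-b_\ell)\le a_\ell$, and Markov's inequality applied to the conditional law, reusing $\E[B_{u,t}]\le\frac{t-1}{T}\Delta_u$ from Inequality~\eqref{ineq:pat-2}. This is essentially the same argument, so nothing further is needed.
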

Lemma~\ref{lem:2} says that the two events $(A_{u,t}=0)$ and $(B_{u,t} \le \Del_u-1)$ are positively correlated. Proof of Lemma~\ref{lem:2} can be found in the full version of the paper (on Arxiv).

Now we have all ingredients to prove the main Theorem~\ref{thm:main-1}. 
\begin{proof}
From Lemmas~\ref{lem:1} and \ref{lem:2}, we have for each $u$ and $t \in [T]$,
\begin{align*}
& \Pr[\SF_{u,t}]  \ge \Pr[(A_{u,t}=0) \wedge (B_{u,t} \le \Del_u-1)] \\
 & \ge \Pr[A_{u,t}=0] \Pr[B_{u,t} \le \Del_u-1] \ge \Big(1-\frac{1}{T}\Big)^{t-1} \Big(1-\frac{t-1}{T}\Big)
 \end{align*}  
 
 For each $f \in E$, let $\kappa_f$ be the expected number of \emph{successful} assignments of $f$ in $\lpalg(\alp, \beta)$. Here an assignment $f=(v,u)$ is successful iff $u$ is available when we assign $v$ to $u$ (but no necessarily means $u$ accepts $v$). 
\begin{align*}  
 \kappa_f  &= \sum_{t=1}^T  \frac{r_v}{T} \Pr[\SF_{u,t}] \Big( \frac{\alp x^*_f}{r_v} + \frac{\beta y^*_f}{r_v}\Big) \\
 &\ge   \sum_{t=1}^T \frac{1}{T} \Big(1-\frac{1}{T}\Big)^{t-1} \Big(1-\frac{t-1}{T}\Big) \Big(\alp x^*_f+ \beta y^*_f\Big) \sim \frac{\alp x^*_f+ \beta y^*_f}{e}
   \end{align*}
 The last term is obtained after taking $T \rightarrow \infty$. 
 
Let $\pro(\alp, \beta)$ be the expected total profit obtained by $\lpalg(\alp, \beta)$. By linearity of expectation, we have $\pro(\alp, \beta) \ge \sum_{f \in E}\frac{1}{e}(\alp x^*_f+ \beta y^*_f\Big) p_e w_e $. From Lemma~\ref{lem:LP}, we know that the expected profit in offline optimal is upper bounded by $\sum_{f \in E} x_f^* p_e w_e$. Thus we claim that $\lpalg(\alp, \beta)$ achieves a ratio at least $\alp/e$ on the profit. Similarly, we can argue that $\lpalg(\alp, \beta)$ achieves a ratio at least $\beta/e$ on the fairness.  
\end{proof}

\section{Hardness Results}

The model in our paper has two objectives which complicate the hardness analysis. 
To simplify it, we focus only on those non-adaptive algorithms. We characterize a non-adaptive algorithm as $\z=\{z_f| f \in E_f\}$ where each $z_f \in [0,1]^{N_v}$ ($N_v$ is the size of $E_v$,  the set of edges incident to $v$) such that $\sum_{f \in E_v}z_{f} \le 1$. A non-adaptive algorithm parameterized with $\z$, denoted by  $\NALG(\z)$ will  sample an assignment $f =(u,v)\in E_v$ with probability $z_{f}$ upon the arrival of $v$, and assign $v$ to $u$ if $u$ is available. Note that our LP-based parameterized algorithm $\lpalg(\alp, \beta)$ can be viewed as a specific non-adaptive with $z_f=\alp x^*_f/r_v +\beta y^*_f/r_v$ for each $f$.

\begin{example}\label{exam:hard}
Consider a star graph where $U=\{u\}$ with $\Del_u=1$, $V=\{v_0,v_1,\ldots, v_K\}$ and $T=K+1$. 
We use $j$ to denote $v_j$ and edge $f_j=(u,v_j)$ when the context is clear. Let $w_j=1$ and $r_j=1$ for all $j=0,1,\ldots, K$. In other words, our star graph is unweighted and the arrival distributions are uniform. Set $p_j=1$ for $j=0$ and $p_j=\ep$ for each $j \in [K]$. Let $\OPTP$ and $\OPTF$ denote the optimal LP values of \LP-\eqref{obj-1} and \LP-\eqref{obj-2} on this example respectively. We can verify that: (1) $\OPTP=1$, where there is a unique optimal solution $x_0=1$ and $x_j=0$ for all $j \in [K]$; (2) $\OPTF=\frac{\ep}{K+\ep}$, where there is a unique optimal solution $x_0=\frac{\ep}{\ep+K}$ and $x_j=\frac{1}{\ep+K}$ for all $j \in [K]$. 
\end{example}

Consider a given non-adaptive algorithm $\NALG(\z)$ , with $\z=\{z_0,z_1,\ldots, z_K\}$ on the above example. Let $P(\z)$ and $F(\z)$ be the expected total profit and fairness achieved by $\NALG(\z)$. Set $z=\sum_{j=0}^K z_j$. Note that $0\le z \le 1$. 

\begin{lemma}\label{lem:hard-cr}
The sum of competitive ratios achieved by $\NALG(\z)$ for the profit and fairness on Example~\ref{exam:hard} is no larger than $1-1/e+2\ep$. 
\end{lemma}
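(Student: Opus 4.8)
\textbf{Proof proposal for Lemma~\ref{lem:hard-cr}.}

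The plan is to write down closed-form expressions for the two competitive ratios as functions of the algorithm's parameters $z_0, z_1, \ldots, z_K$, and then show that their sum is maximized subject to the constraint $z = \sum_j z_j \le 1$ at a value no larger than $1 - 1/e + 2\ep$. First I would compute the expected profit $P(\z)$ and expected fairness $F(\z)$ achieved by $\NALG(\z)$ on the star graph of Example~\ref{exam:hard}. The key observation is that the single driver $u$ has $\Del_u = 1$, so $u$ becomes unavailable as soon as it \emph{accepts} any request. Thus for each edge $f_j$, the probability that a given arrival of $v_j$ results in a successful match is $z_j p_j$ times the probability that $u$ is still available at that time. Since arrivals are i.i.d.\ uniform over $K+1$ request types across $T = K+1$ rounds, I would track the availability of $u$ over time: at each round the probability that $u$ gets matched-and-accepted is $\frac{1}{T}\sum_j z_j p_j$, so the probability $u$ is still available at round $t$ is $\big(1 - \frac{1}{T}\sum_j z_j p_j\big)^{t-1}$. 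Summing the contribution of each edge over all $T$ rounds and letting $T \to \infty$ (as in the proof of Theorem~\ref{thm:main-1}) should give clean expressions; with $p_0 = 1$ and $p_j = \ep$ for $j \in [K]$, writing $s = z_0 + \ep\sum_{j\ge 1} z_j$ for the per-round success mass, the expected number of successful accepted matches on edge $f_j$ is approximately $z_j p_j \cdot \frac{1 - e^{-s}}{s}$.

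Next I would divide by the benchmark values $\OPTP = 1$ and $\OPTF = \frac{\ep}{K+\ep}$ supplied in Example~\ref{exam:hard} to get the two competitive ratios, then add them. The profit ratio is $P(\z) = \frac{1-e^{-s}}{s}\big(z_0 + \ep^2 \sum_{j\ge 1} z_j\big)$ (here $p_j w_j = \ep$ and the match on $f_j$ contributes profit, giving an extra factor $p_j$), and the fairness ratio is $\frac{1}{\OPTF}\min_j \frac{\E[|\cM_{v_j}|]}{r_j}$, where the minimum over the low-probability requests $v_j$, $j \in [K]$, will dominate because each such $v_j$ is matched only with probability proportional to $\ep z_j$. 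The function $g(s) = \frac{1-e^{-s}}{s}$ is decreasing on $(0,1]$ and satisfies $g(1) = 1 - 1/e$, which is where the target constant $1 - 1/e$ enters; the two $+\ep$ slack terms should absorb all the $O(\ep)$ contributions coming from the $\ep$-weighted profit terms and from the discrepancy between $s$ and $z$. The main structural point is that the profit objective wants to put all weight on $z_0$ (the only high-probability, high-benchmark edge) while the fairness objective wants to spread weight onto the $z_j$'s, and these demands compete through the shared availability budget of the single driver.

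The main obstacle I anticipate is carrying out the constrained optimization of $P(\z) + F(\z)$ cleanly enough to extract the bound $1 - 1/e + 2\ep$ rather than a messier constant. Because $s$ and $z$ differ (the fairness contribution and the availability decay are governed by the $\ep$-scaled mass $s$, not by $z$ itself), I expect I will need to bound the fairness ratio from above by something like $\frac{K+\ep}{\ep}\cdot \ep\, z_{\min}\, g(s) \le z\, g(s) + O(\ep)$ and simultaneously bound the profit ratio by $z_0\, g(s) + O(\ep) \le z\, g(s) + O(\ep)$, so that naively adding them would give $2 z\, g(z)$, which is too large. The real subtlety is therefore that the \emph{same} edge weights cannot simultaneously maximize both ratios: the profit ratio is essentially $z_0\, g(s)$ while the fairness ratio is essentially $z_{\ge 1}\, g(s)$ where $z_{\ge 1} = \sum_{j\ge 1} z_j$, and since $z_0 + z_{\ge 1} = z \le 1$ these add to at most $z\, g(s) \le g(z) \le g(1) = 1 - 1/e$ up to the $\ep$ corrections. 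Pinning down this decomposition — showing that the sum of the two ratios telescopes to a single $z\, g(s)$ term plus controllable $O(\ep)$ error, and then maximizing $z\,g(s)$ over the feasible region — is the crux of the argument, and I would spend most of the effort making that step rigorous.
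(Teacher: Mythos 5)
Your high-level plan is the same as the paper's: compute $P(\z)$ and $F(\z)$ in closed form, bound the fairness of the worst type by the average over $j\in[K]$, divide by $\OPTP=1$ and $\OPTF=\frac{\ep}{K+\ep}$, and exploit the decomposition ``profit ratio $\approx \frac{z_0}{z}\times(\text{decay})$, fairness ratio $\approx \frac{z_{\ge 1}}{z}\times(\text{same decay})$'' with coefficients summing to one. But there is a genuine gap in your dynamics, and it is fatal rather than technical: you let the driver survive canceled probes, taking the availability at round $t$ to be $\bigl(1-\frac{1}{T}\sum_j z_j p_j\bigr)^{t-1}$, i.e., decay governed by the \emph{acceptance} mass $s=z_0+\ep z_{\ge 1}$. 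The example sets $\Del_u=1$ precisely to rule this out: in the paper's accounting, a driver with $\Del_u=1$ is consumed by its \emph{first assignment whether or not the driver accepts it}, so the correct decay is $\bigl(1-\frac{z}{T}\bigr)^{t-1}$, governed by the total sampling mass $z=\sum_j z_j$. With that, the paper gets $P(\z)\le \frac{z_0}{z}(1-e^{-z})+\ep$ and $F(\z)\le \frac{\ep z_{\ge 1}}{zK}(1-e^{-z})$, and the sum of ratios collapses to $1-e^{-z}+2\ep\le 1-1/e+2\ep$ (using $z\le 1$, which the paper asserts for its class of non-adaptive algorithms and which you also assume).

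Under your dynamics the lemma is simply false, so the constrained optimization you propose to spend most of your effort on cannot be made to work. Take $z_0=0$ and $z_j=1/K$ for $j\in[K]$, so $z=z_{\ge 1}=1$ and $s=\ep$: since canceled $\ep$-probes cost nothing in your model, each $v_j$ is matched with probability $\frac{1-e^{-\ep}}{K}$, giving fairness ratio $\frac{K+\ep}{K}\cdot\frac{1-e^{-\ep}}{\ep}\to 1$, while the profit ratio is $1-e^{-\ep}=O(\ep)$; the sum is $\approx 1>1-1/e+2\ep$. The symptom also appears in your final chain: with $g(s)=\frac{1-e^{-s}}{s}$ you write $z\,g(s)\le g(z)\le g(1)=1-1/e$, but $g$ is \emph{decreasing}, so $g(z)\ge g(1)$ for $z\le 1$, and $z\,g(s)$ can approach $z$ when $s=\ep\ll z$. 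The inequality that actually closes the proof is $z\,g(z)=1-e^{-z}\le 1-1/e$, and it only becomes available once the decay argument is $z$ rather than $s$ --- which is exactly what the cancellation budget buys. (A minor additional slip, in the conservative direction: your profit term $z_j\ep^2$ double-counts $p_j$; the match on $f_j$ occurs with one factor $p_j=\ep$ and then contributes $w_j=1$, so the term is $z_j\ep$, as in the paper.)
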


\begin{proof}
\begin{align*}
P(\z)&=\frac{z_0}{T} \sum_{t=1}^T \Big(1-\frac{\sum_{j=0}^K z_j}{T} \Big)^{t-1}+ \sum_{j=1}^K \frac{z_j \ep}{T}   \sum_{t=1}^T \Big(1-\frac{\sum_{j=0}^K z_j}{T} \Big)^{t-1} 
\\
&=\frac{z_0}{z}\Big(1-\exp(-z) \Big)+\frac{(\sum_{j=1}^K z_j) \ep}{z}\Big(1-\exp(-z) \Big)~(T \rightarrow \infty)\\
&\le \frac{z_0}{z}\Big(1-\exp(-z) \Big)+\ep 
\end{align*}

From the above computation, we see that the expected sum of acceptances of assignments over all $j \in [K]$ during the online phase is $\frac{(\sum_{j=1}^K z_j) \ep}{z}\Big(1-\exp(-z) \Big)$. Thus the minimum ratio should be no larger than the average, from which we have
$$F(\z) \le \frac{(\sum_{j=1}^K z_j) \ep}{z K}\Big(1-\exp(-z) \Big)$$

Note that $\OPTP=1$ and $\OPTF=\frac{\ep}{K+\ep}$. Thus the sum of competitive ratios on the profit and fairness should be 
\begin{align*}
& \frac{P(\z)}{\OPTP}+\frac{F(\z)}{\OPTF} \\
& \le  \frac{z_0}{z}\Big(1-\exp(-z) \Big)+\ep+\frac{(\sum_{j=1}^K z_j) }{z} \frac{K+\ep}{K}\Big(1-\exp(-z) \Big) \\
& \le 1-\exp(-z)+2 \ep \le 1-1/e+2\ep
\end{align*}
\end{proof}
 Notice that Lemma~\ref{lem:hard-cr} immediately implies our main Theorem~\ref{thm:hard}.

\begin{figure*}[!t]
	\centering
	\begin{subfigure}
		\centering
		\includegraphics[width=0.21\textwidth]{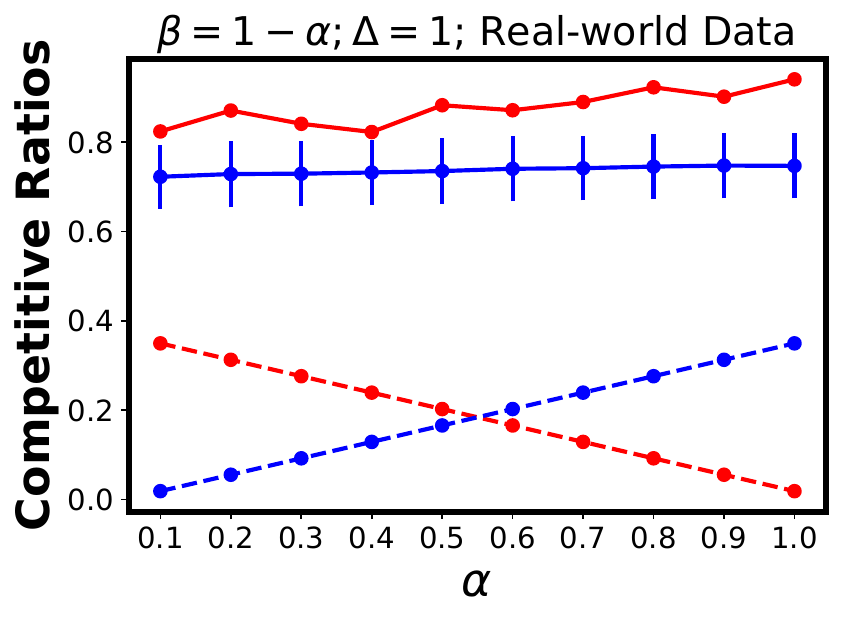}
	\end{subfigure}
	\begin{subfigure}
		\centering
		\includegraphics[width=0.21\textwidth]{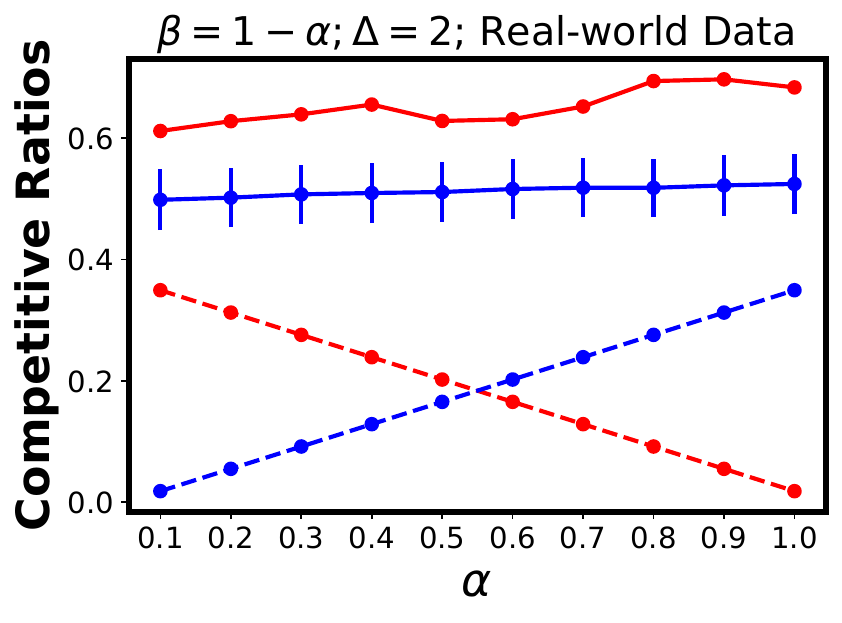}
	\end{subfigure}
	\begin{subfigure}
		\centering
		\includegraphics[width=0.21\textwidth]{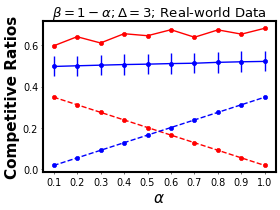}
	\end{subfigure}
	\begin{subfigure}
		\centering
		\includegraphics[width=0.15\textwidth]{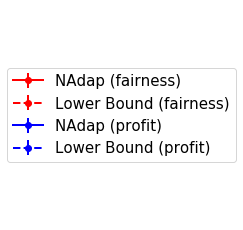}
	\end{subfigure}
\caption{Real dataset: competitive ratios for profit and fairness with different values of $\alpha$ and $\beta$ with $\alpha + \beta = 1$. $|U| = 48, |V| = 24, T = 359$. $\Delta=1$ (Left), $\Delta=2$ (Middle), and $\Delta=3$ (Right).}
\label{fig:profit_fairness_crs}
\end{figure*}

\begin{figure*}[!t]
	\centering
	\begin{subfigure}
		\centering
		\includegraphics[width=0.21\textwidth]{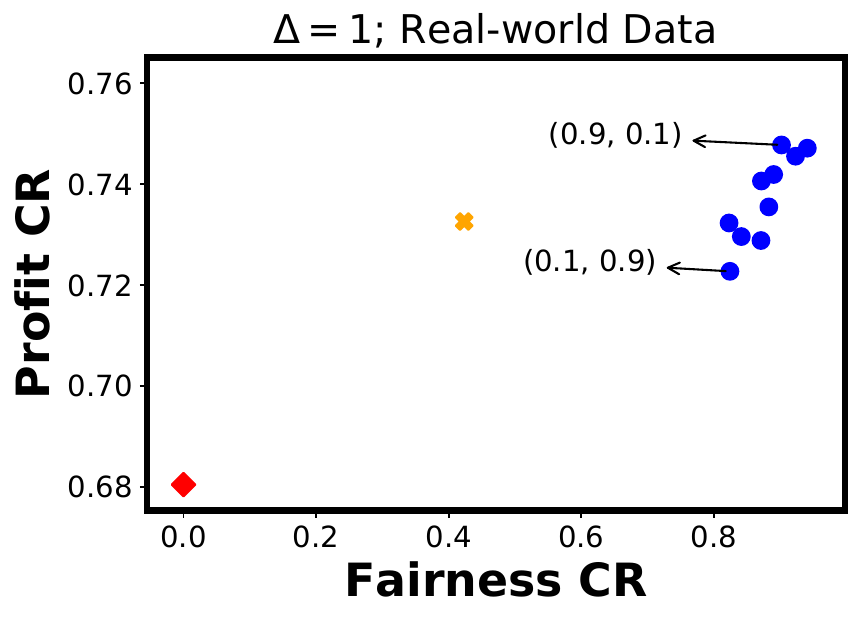}
	\end{subfigure}
	\begin{subfigure}
		\centering
		\includegraphics[width=0.21\textwidth]{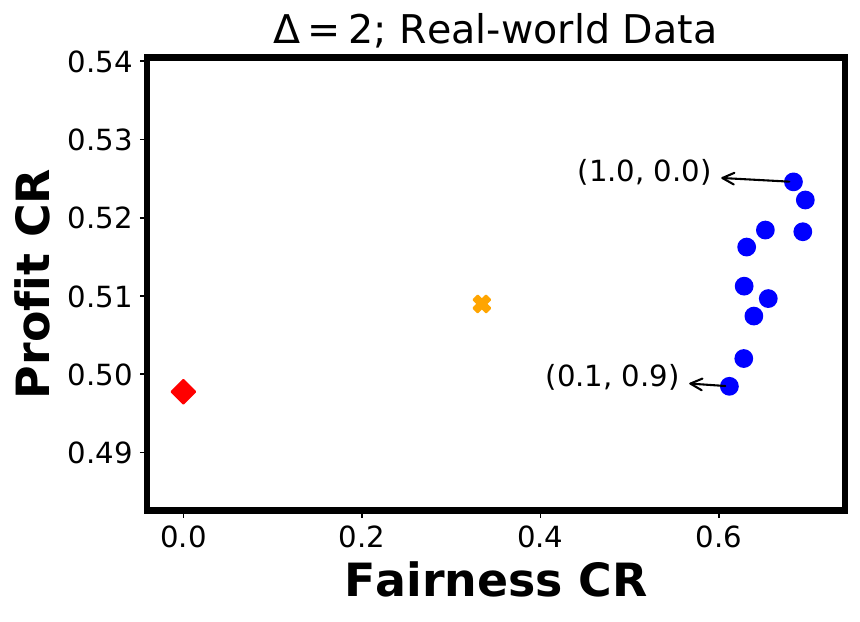}
	\end{subfigure}
	\begin{subfigure}
		\centering
		\includegraphics[width=0.21\textwidth]{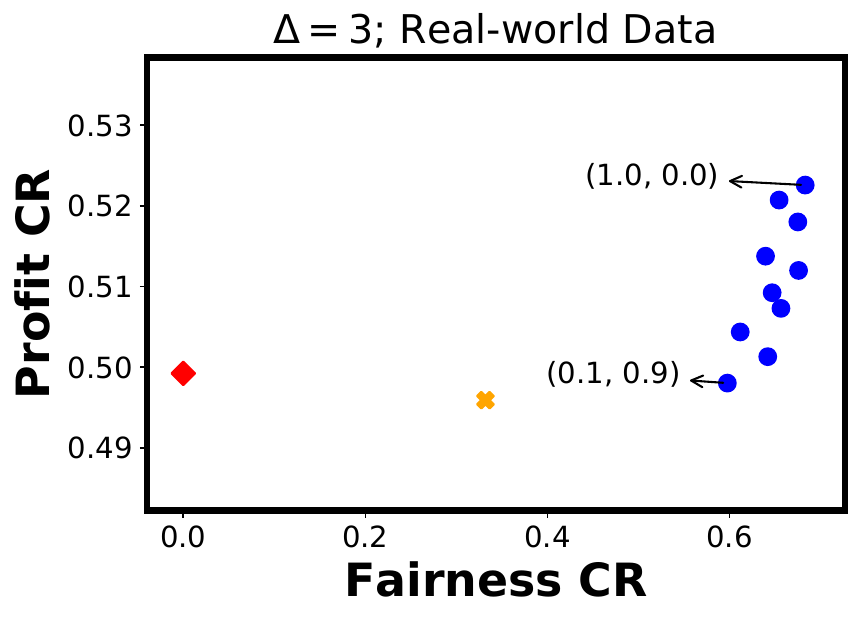}
	\end{subfigure}
	\begin{subfigure}
		\centering
		\includegraphics[width=0.15\textwidth]{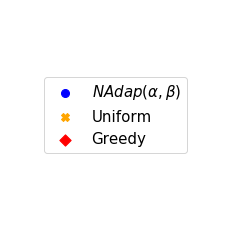}
	\end{subfigure}
\caption{Real dataset: comparison of performances of \lpalg against \gre and \uni. $|U| = 48, |V| = 24, T = 359$. $\Delta=1$ (Left), $\Delta=2$ (Middle), and $\Delta=3$ (Right).}
\label{fig:scatter}
\end{figure*}

\section{Experiments}\label{sec:exp}

\begin{figure*}
	\centering
	\begin{subfigure}
		\centering
		\includegraphics[width=0.22\textwidth]{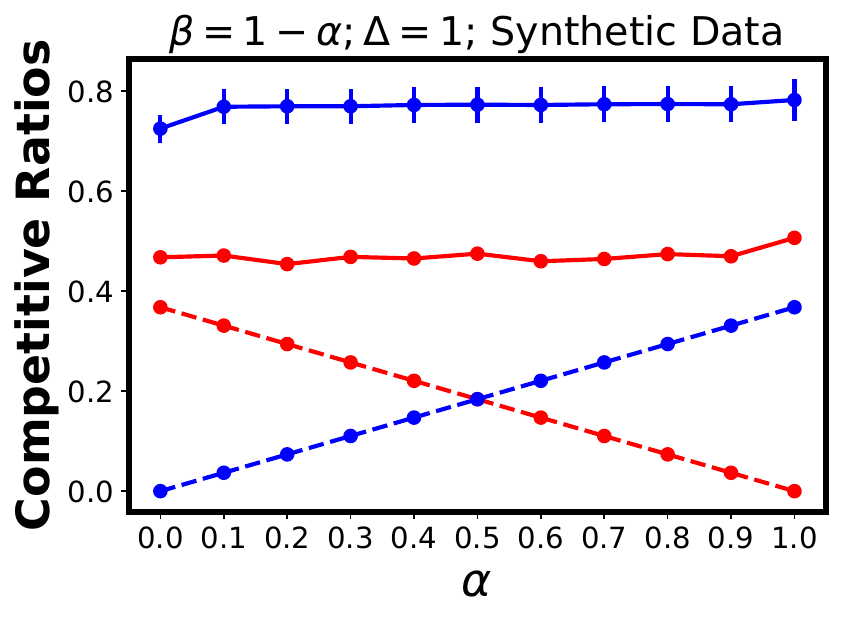}%
	\end{subfigure}
	\begin{subfigure}
		\centering
		\includegraphics[width=0.22\textwidth]{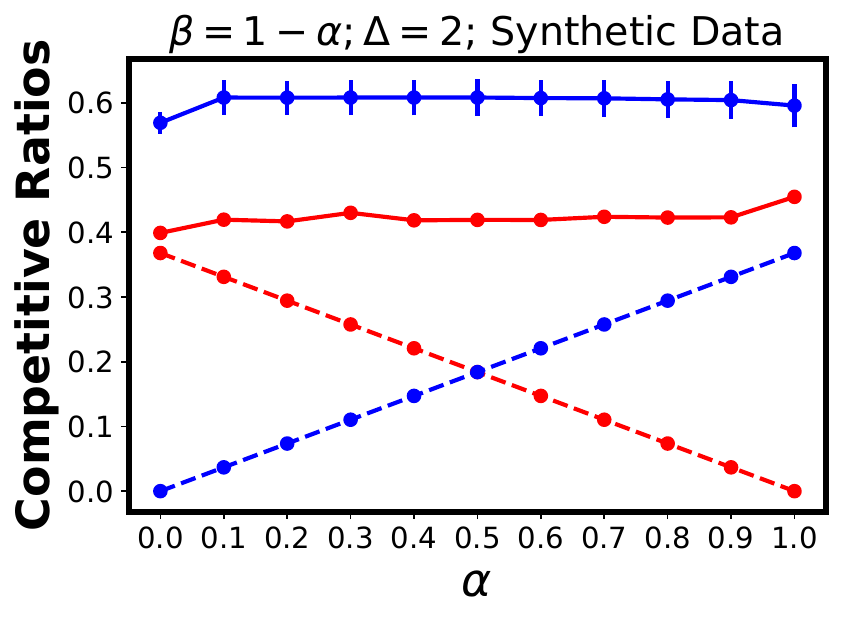}%
	\end{subfigure}
	\begin{subfigure}
		\centering
		\includegraphics[width=0.22\textwidth]{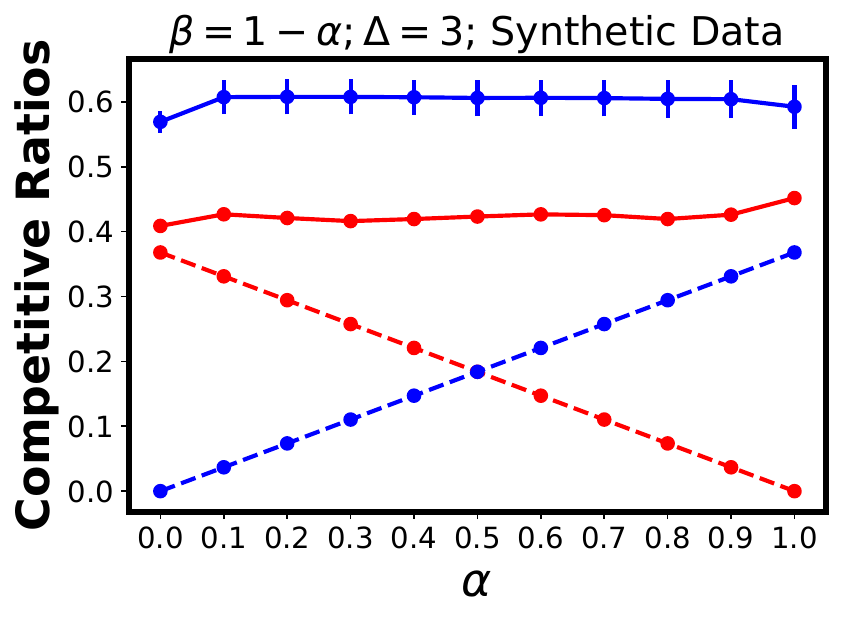}%
	\end{subfigure}
	\begin{subfigure}
			\centering
			\includegraphics[width=0.15\textwidth]{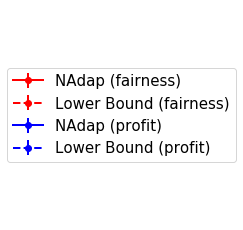}%
	\end{subfigure}
\caption{Synthetic dataset: competitive ratios for profit and fairness with different values of $\alpha$ and $\beta$ with $\alpha + \beta = 1$. $|U| = 100, |V| = 50, T = 700 , p_f \in [0.5,1]$. $\Delta=1$ (Left), $\Delta=2$ (Middle), and $\Delta=3$ (Right).}
\label{fig:profit_fairness_crs_synthetic_new_params}
\end{figure*}

\begin{figure*}
	\centering
	\begin{subfigure}
		\centering
		\includegraphics[width=0.22\textwidth]{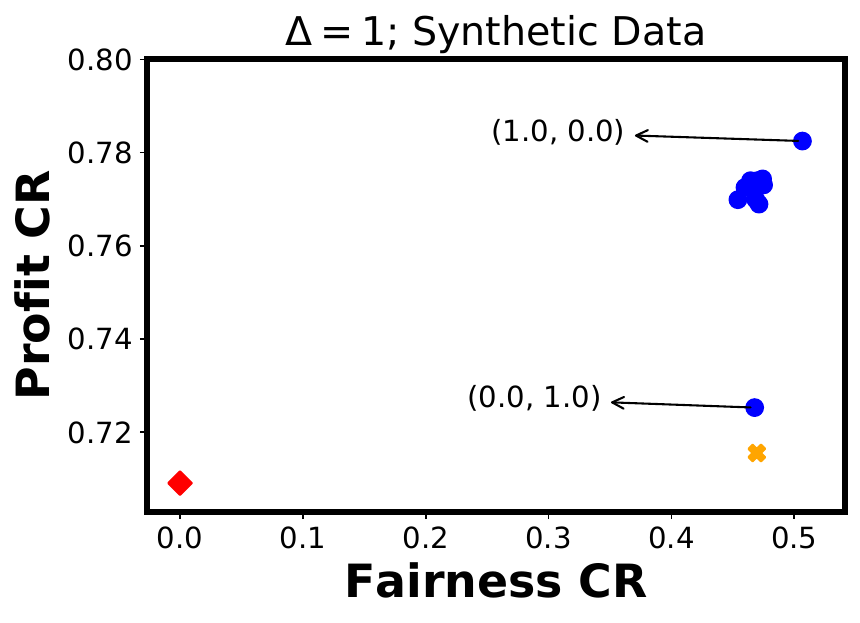}%
	\end{subfigure}
	\begin{subfigure}
		\centering
		\includegraphics[width=0.22\textwidth]{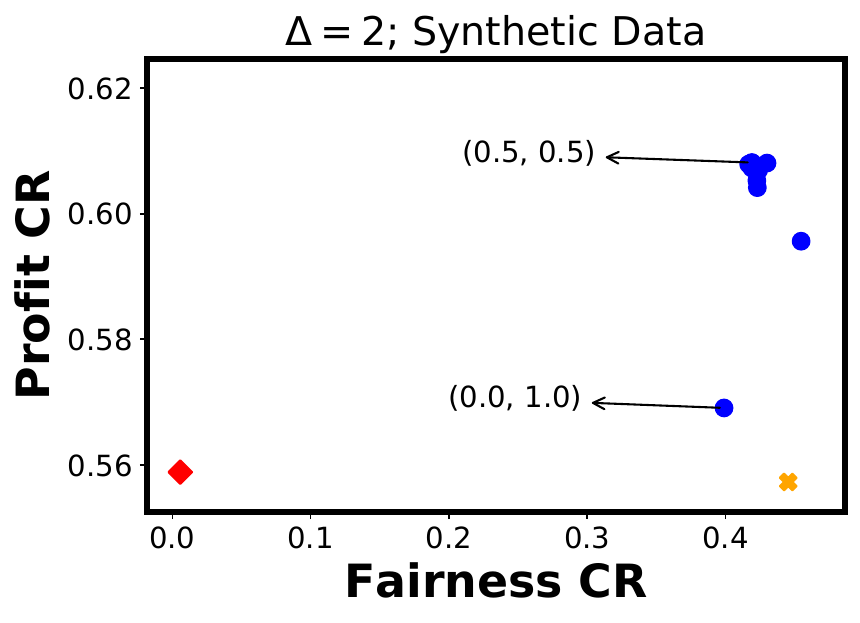}%
	\end{subfigure}
	\begin{subfigure}
		\centering
		\includegraphics[width=0.22\textwidth]{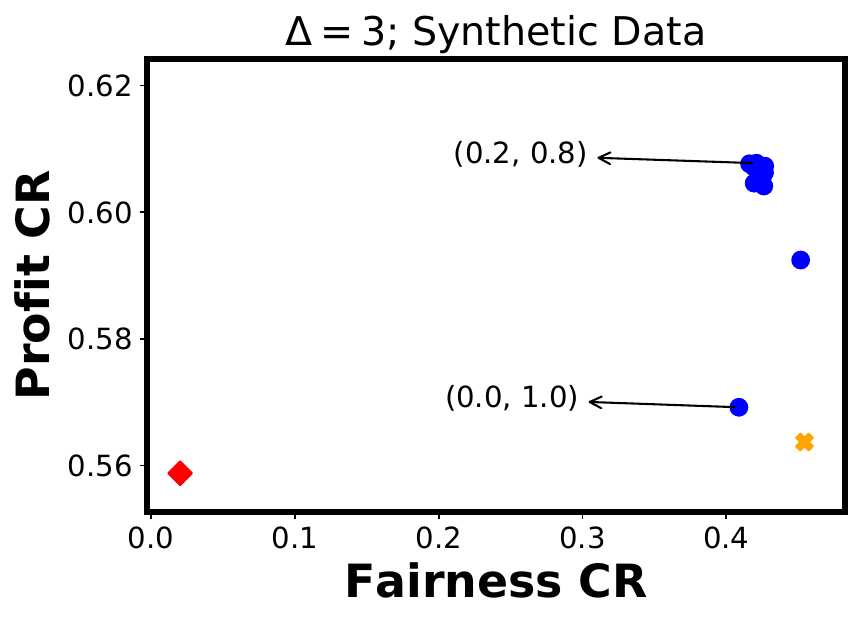}%
	\end{subfigure}
	\begin{subfigure}
		\centering
		\includegraphics[width=0.15\textwidth]{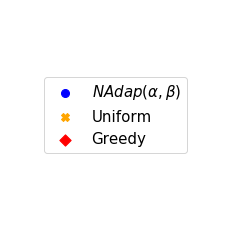}%
	\end{subfigure}
\caption{Synthetic dataset: comparison of performances of \lpalg against \gre and \uni. $|U| = 100, |V| = 50, T = 700 , p_f \in [0.5,1]$. $\Delta=1$ (Left), $\Delta=2$ (Middle), and $\Delta=3$ (Right).}
\label{fig:scatter_synthetic_new_params}
\end{figure*}

\xhdr{Real-world dataset processing}. 
We now provide experimental validation of our algorithms.  We use a real-world dataset: the New York City yellow cabs dataset\footnote{\url{http://www.andresmh.com/nyctaxitrips/}}\footnote{\url{https://tinyurl.com/y56gonvs}} which contains taxi trip records from Manhattan, Brooklyn, and Queens during the year 2013. The dataset is split into $12$ months. For each month we have a the record of a completed trip that contains an anonymized hash of a driver's license, the start and end location of the trip (latitude and longitude), time and date when the trip was initiated and ended, time taken to complete the trip, distance of the trip (in miles) and additional attributes such as number of passengers and registration number. Demographics of the drivers and riders are not known; however, to demonstrate our methodology, we randomly assign a ``disadvantaged'' or ``advantaged'' race status to all requesters such that ratio of disadvantaged riders to advantaged riders is 1:2, which roughly matches the racial demographics of the city.\footnote{\url{https://tinyurl.com/y7q8dppr}} For drivers, we randomly assign race such that the ratio of disadvantaged to advantaged drivers is 3:1.\footnote{\url{https://tinyurl.com/y27anseh}}\footnote{For simplicity, we ignore other potential discrimination factors such as gender and disability.} For the purpose of demonstration, we consider race to be binary; however, in reality, there would be many other races, which our model can easily incorporate, given that the distribution of $p_f$ is known or can be learned.

We consider the rush hour of 7--8PM; in our dataset, this period typically has the largest number of trips per hour of the day. On Jan. 31st 2013, a total of \num{35109} trips were completed by a total of \num{10814} drivers.  One driver might make multiple trips during this period. However, here we assume that once a driver starts the trip, they are out of the system. So, we assume that all \num{10814} drivers are present at the start and need to serve a total of \num{35109} trips (\ie $T = 35,\!109$). This is consistent with peak hour assumptions ($T \gg |U|$). Similar to~\citen{xuAAAI18}, we focus on longitudes from $-73^{\circ}$ to $-75^{\circ}$ and latitudes from $40.4^{\circ}$ to $40.95^{\circ}$, both with a step size of $0.05$. We then assign an index to each grid and every location with latitudes and/or longitude in a particular grid is represented by the index of that grid.

We construct our graph $G=(U,V,E)$ as follows. Each $u \in U$ represents a driver type which has attributes of the starting location and race. Each $v \in V$ represents a request type which has attributes of the starting location, ending location, and race. We downsample from all driver and request types such that $|U|=48$ and $|V|=24$. For each $u$, we set a uniform cancellation quota $\Del$; across three experiments, we vary $\Del \in \{1,2,3\}$. For each $v$, we generate a random value $r_v \sim \mathcal{N}(15, 1)$ (normal distribution) and assign the value as its arrival rate. Set $T=\sum_{v \in V} r_v$. We assume that the driver's starting coordinates belong to the same bin as that of the start coordinates of the trip. An edge exists between a driver type $u$ and a request type $v$, iff the starting coordinates of $v$ is in the same bin as that of $u$. We set the existing probabilities mainly based on the combination of driver and rider's races. When both driver and rider belong to the advantaged class, we assign $p_f = 0.6$; if driver and rider are both members of the disadvantaged class then we assign $p_f = 0.3$ and in all other cases $p_f = 0.1$. These probabilities are then scaled up by a factor $\kappa$, \ie $p_f$ = $\kappa$ + $(1 - \kappa)\cdot p_f$. In our experiments we set $\kappa = 0.5$. The profit associated with each edge ($w_f$) is defined as the normalized distance of the request type $v$ (so $0 \le w_f \le 1$).

\xhdr{Synthetic dataset}.
To generate a synthetic dataset, we fix $|U| = 100, |V| = 50, T = 700$, and sample arrival rates randomly from a multinomial distribution to ensure they sum up to $T(=700)$. Between each pair $u \in U$ and $v \in V$, an edge exists with probability $0.1$ and does not exist with probability $0.9$. The probabilities $p_f$ are randomly sampled such that $p_f \in [0.5, 1]$ and the profit values ($w_f$) associated with each edge are randomly sampled such that $w_f \in [0,1]$. We assign each $u$ the same cancellation quota $\Delta$, and vary $\Delta \in \{1,2,3\}$ across three experiments.

\xhdr{Algorithms}. We run \lpalg for $5000$ iterations and take the average values over these runs to be the expectations. We use profit computed via \LP-\eqref{obj-1} as the benchmark and use it to calculate the competitive ratio of profit for \lpalg. Similarly, we use \LP-\eqref{obj-2} as the benchmark for fairness and use it to calculate the competitive ratio of fairness for \lpalg. 
For \gre and \uni, say a request of type $v$ arrives at time $t$ and let $E_{v,t}$ be the set of available assignments at $t$ and let $E_v$ be all possible assignments for $v$. In \uni, we sample an assignment uniformly from $E_v$ and check if it exists in $E_{v,t}$. If it exists, then we make the assignment; otherwise, we reject the request. In \gre, we select the assignment with the highest $p_f$ value among $E_{v,t}$.

\xhdr{Results on the real and synthetic datasets}.
Figures~\ref{fig:profit_fairness_crs} and~\ref{fig:scatter} show results on the real dataset. We see that the solid lines (performance of \lpalg) always stay above the dotted ones (theoretical lower bounds). 
Note that a higher $\beta$ means that $\lpalg$ is guided more heavily by $\y^{*}$ and a higher $\alpha$ means $\lpalg$ is guided more heavily by $\x^{*}$. For the cases when this trend is not apparent (\eg for $\Delta = 3$ in Figure~\ref{fig:profit_fairness_crs}), we posit that the optimal solutions ($\x^{*}$ and $\y^{*}$) are highly correlated. Figure~\ref{fig:scatter} shows that the effectiveness and flexibility of \lpalg on both objectives compared to \gre and \uni. 
Specifically, \gre is very weak and dominated by almost all variants of \lpalg on both objectives. This is expected since \gre as a heuristic is short-sighted and does not make assignments based on the what kind of requests the system expects to get in the future. \uni seems to achieve relatively high fairness, which makes sense due to the nature of \uni. However, \lpalg provides a wide range of options and some of them can beat \uni on both objectives as well.  Figures~\ref{fig:profit_fairness_crs_synthetic_new_params} and~\ref{fig:scatter_synthetic_new_params} show results on the synthetic dataset. The trends are more consistent since all values here are randomly picked. We see that even in this case \lpalg is able to do better than \gre and \uni on both the objectives. 

\section{Conclusion}

In this work, we present a flexible approach for matching requests to drivers during peak hours that balances fairness and profit. Our proposed approach allows the system designer to specify how fair and how profitable they want the system to be via two separate parameters. We take a nuanced view of the ridesharing market and model the problem as an online bipartite matching problem with stochastic rewards. One highlight is the introduction of existence probability on each edge, which captures the potential acceptance rate for each pair of driver and rider types. We present an LP-based algorithm that dynamically assigns requests to drivers. Extensive experimental results on both of the real and synthetic datasets show that our proposed approach is not only above the theoretical lower bounds but also can beat natural approaches such as \gre and \uni on both objectives. Our work presents many interesting directions for future research. For example, we propose a non-adaptive algorithm in which once a request is rejected (either by the system or the driver), it will not be assigned again. A possible direction for future work could be to re-assign a rejected request. Finally, we assume that drivers do not appear again in the system for the peak hour, however, in reality, that might not always be the case. We hope that our work will encourage the community to look at such problems.

\section{Acknowledgements}
We would like to thank anonymous reviewers for their helpful comments. Nanda and Dickerson were supported by NSF CAREER Award IIS-1846237 and DARPA SI3-CMD Award S4761. Srinivasan was supported in part by NSF CNS-1010789, CCF-1422569, and CCF-1749864, and by research awards from Adobe, Amazon, and Google. Dickerson and Srinivasan were both supported by a gift from Google and a seed grant from the Maryland Transportation Institute. Work and the corresponding funding was done when Sankararaman was affiliated with University of Maryland, College Park.

{\small
\bibliographystyle{aaai}
\bibliography{stable_ref}
}

\newpage 

\section{Proof of Lemma 3}

\emph{\textbf{Lemma 3}}
$$\Pr[B_{u,t} \le \Del_u-1 | A_{u,t}=0 ] \ge 1-\frac{t-1}{T}$$

\begin{proof}
Consider a given $f \in E_u$ and $\ell<t$.
\begin{align*}
&\E[X_{f,\ell} Y_{f,\ell} |A_{u,t}=0] =\E[X_{f,\ell} Y_{f,\ell} |A_{u,\ell}=0] \\
&=\frac{\Pr[X_{f,\ell}=Y_{f,\ell}=1, Z_{f,\ell}=0]}{\Pr[A_{u,\ell}=0]}\\
&=\frac{(r_v/T)(\alp x_f^*/r_v+ \beta y_f^*/r_v) (1-p_f)}{1-\sum_{f\in E_u} (r_v/T) (\alp x_f^*/r_v+ \beta y_f^*/r_v)p_f}\\
&=\frac{(r_v/T)(\alp x_f^*/r_v+ \beta y_f^*/r_v) (1-p_f)}{1-p_f+p_f\Big(1-\sum_{f\in E_u} (1/T) (\alp x_f^*+ \beta y_f^* ) \Big)}\\
& \le  (r_v/T)(\alp x_f^*/r_v+ \beta y_f^*/r_v) =\E[X_{f,\ell} Y_{f,\ell}]
\end{align*}
The last inequality follows from the fact that $\sum_{f \in E_u}  (1/T)(\alp x_f^*+ \beta y_f^* ) \le \Del_u/T<1$ according to our assumption. The above analysis implies that conditional expectation of each term  $X_{f,\ell} Y_{f,\ell}$ over $(A_{u,t}=0)$ is no more than the unconditional expectation. By applying a similar analysis in the second part of proof in Lemma ~\ref{lem:1}, we get our claim.  
\end{proof}

\begin{figure}[!ht]
	\begin{subfigure}
		\centering
		\includegraphics[width=0.6\textwidth]{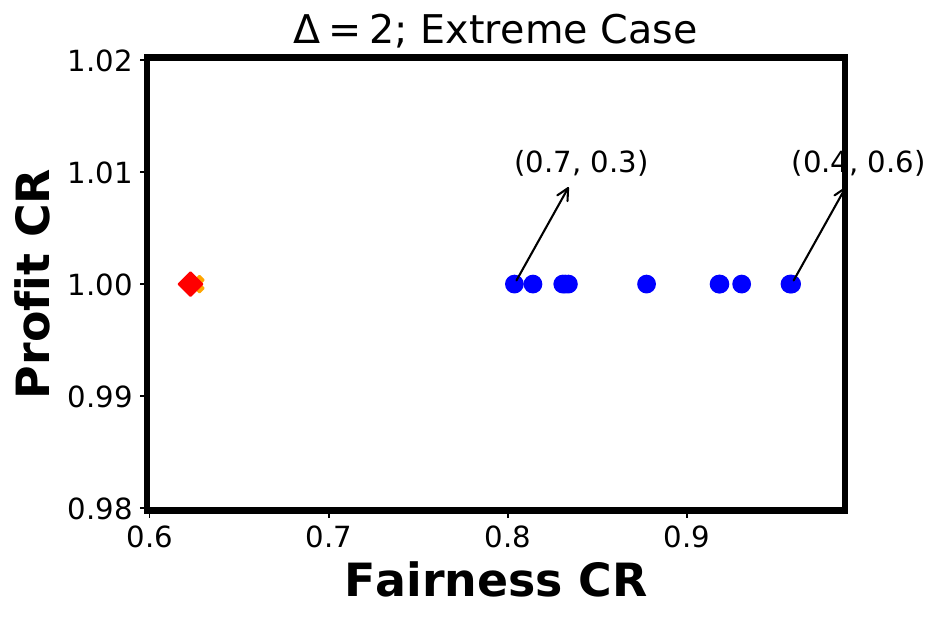}
	\end{subfigure}
	\begin{subfigure}
		\centering
		\includegraphics[width=0.6\textwidth]{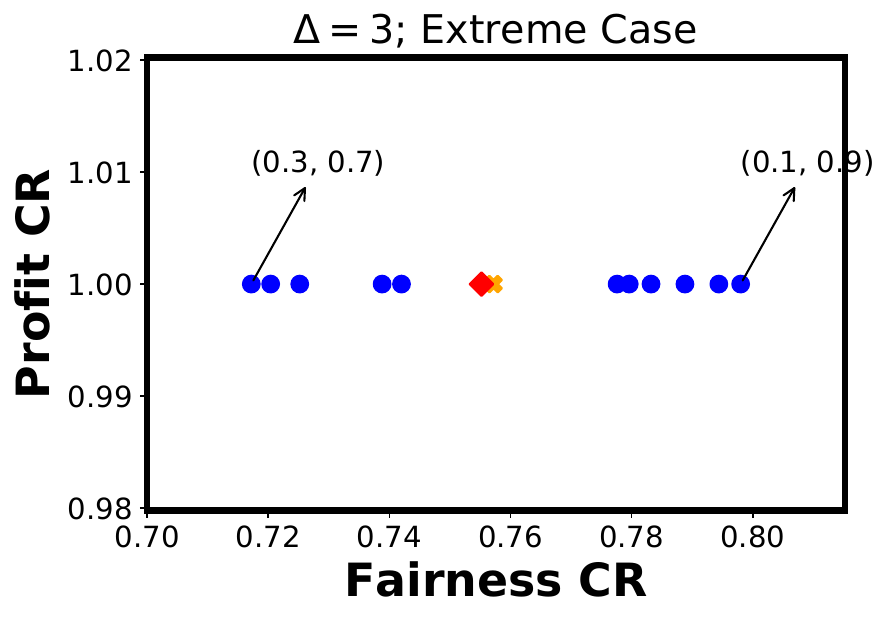}
	\end{subfigure}
	\begin{subfigure}
		\centering
		\includegraphics[width=0.4\textwidth]{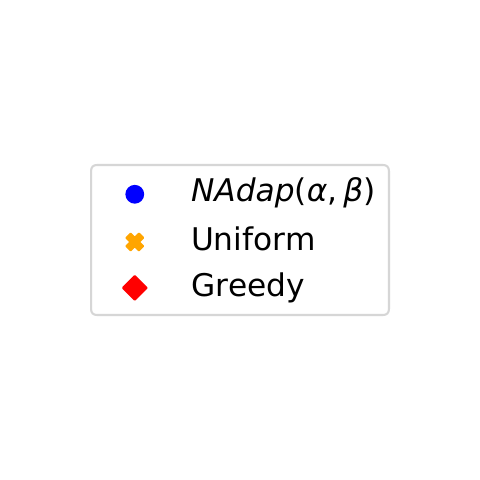}
	\end{subfigure}
\caption{Extreme Case: comparison of performances of \lpalg against \gre and \uni. $|U| = 1$, $|V| = T = 3$ with $p_0 = 1$ and $p_1, p_2 = 0.25$. All $w_f = 1$ and $r_v = 1$. $\Delta=2$ (Top), and $\Delta=3$ (Bottom).}
\label{fig:scatter_synthetic_extreme}
\end{figure}

\section{Peak Hour}

Based on the empirical distribution of the number of completed trips by hour of the day (Figure~\ref{fig:requests_distribution}) for January 2013, we see that 7PM - 8PM (19:00 - 20:00 hrs) has the maximum number of completed trips. Thus, we use this hour as the peak hour in all our experiments on the NYC taxi dataset.

\begin{figure}[!h]
\centering
\caption{Distribution of requests by hour of the day averaged over January.}
\includegraphics[width=0.7\textwidth]{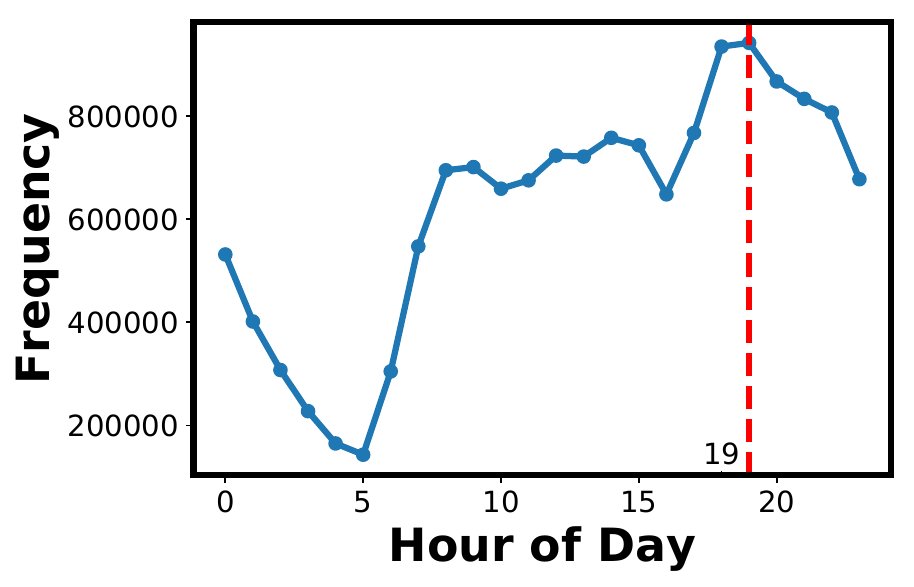}
\label{fig:requests_distribution}
\end{figure}

\section{Experiments - Extreme Case}

\begin{figure}[!ht]
	\begin{subfigure}
		\centering
		\includegraphics[width=0.6\textwidth]{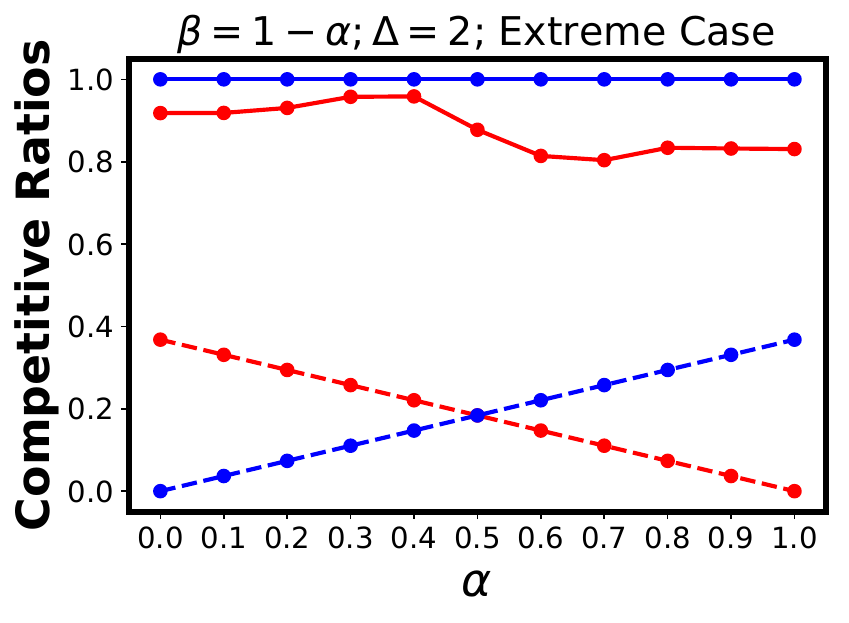}
	\end{subfigure}
	\begin{subfigure}
		\centering
		\includegraphics[width=0.6\textwidth]{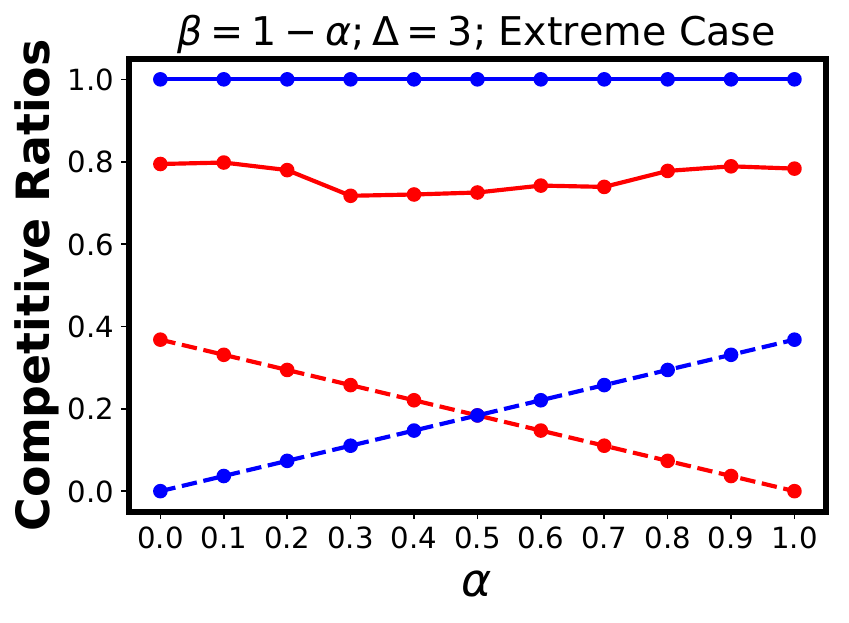}
	\end{subfigure}
	\begin{subfigure}
		\centering
		\includegraphics[width=0.4\textwidth]{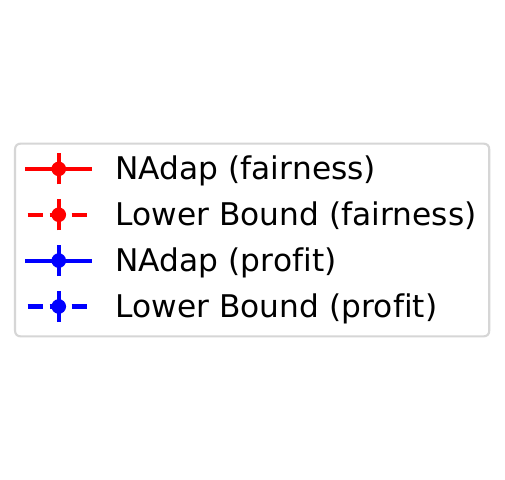}
	\end{subfigure}

\caption{Extreme Case: competitive ratios for profit and fairness with different values of $\alpha$ and $\beta$ with $\alpha + \beta = 1$. $|U| = 1$, $|V| = T = 3$ with $p_0 = 1$ and $p_1, p_2 = 0.25$. All $w_f = 1$ and $r_v = 1$. $\Delta=2$ (Top), and $\Delta=3$ (Bottom).}
\label{fig:profit_fairness_crs_synthetic_extreme}
\end{figure}

To illustrate the effect of $\alpha$ and $\beta$ on \lpalg, consider the extreme case when |U| = 1 ($U = {u_0}$) and |V| = 3 and $V = {v_0, v_1, v_2}$ (\ie there is only one driver in the system and 3 types of request come in during the pear hour). The arrival rate ($r_v$) for each request is 1, which gives $T = 3$. The edge probabilities are as follows, $p_{u_0, v_0} = 1$ and $p_{u_0, v_k} = 0.25$ for $k=1, 2$. For simplicity, let profit associated with each edge ($w_f$) be 1. In this scenario, if one were to maximize profit (via solving \LP-\eqref{obj-1}), a possible solution to $x_f$ (number of probes to an edge) could be $x_0 = 1$ and $x_1 = x_2 = 0$. In such a case, fairness of the system would be zero (since requests $v_1$ and $v_2$ would never be matched). However, if one were to maximize fairness (via solving \LP-\eqref{obj-2}), then the solution would ensure that all $x_i > 0$. Recall that by definition of fairness, \LP-\eqref{obj-2} would maximize the expected number of probes for the $v$ with minimum number of matched edges, which would ensure that all types of requests are matched over time. Hence this case gives us very different solutions to \LP-\eqref{obj-1} ($x^{*}$) and \LP-\eqref{obj-2} ($y^{*}$). Thus, running \lpalg with different $\alpha$ and $\beta$ values should result in different matchings and thus different fairness and porfit values. The results are discussed in the next section.

\subsection{Results}

It is important to note that we must set $\Delta >= 2$ else we would be forcing the driver to accept the second ride and the third ride would never get the chance to be matched to a driver. From Figure~\ref{fig:profit_fairness_crs_synthetic_extreme}, we can see the trend of decreasing fairness with increasing $\alpha$ (since $\beta$ controls fairness and as $\alpha$ increases $\beta$ decreases). Figure ~\ref{fig:scatter_synthetic_extreme} compares \lpalg to \gre and \uni. In our setting all profits are equal (all $w_f = 1$) hence the dots for \gre and \uni are very close by. \gre chooses the driver that is most likely to accept the trip out of all \emph{available} drivers and \uni randomly chooses a driver and then checks for availability. This is the reason behind the slight difference in competetive ratios of \uni and \gre. However, as is clear from Figure ~\ref{fig:scatter_synthetic_extreme}, our proposed algorithm, for some choice of ($\alpha, \beta$), performs much better.

	\end{document}